%% file: main.tex
\documentclass[lettersize,journal,twoside]{IEEEtran}
\usepackage{amsmath,amsfonts}
\usepackage{array}
\usepackage[caption=false,font=normalsize,labelfont=sf,textfont=sf]{subfig}
\usepackage{textcomp}
\usepackage{stfloats}
\usepackage{url}
\usepackage{verbatim}
\usepackage{graphicx}
\usepackage{cite}
\usepackage{hyperref}
 
\usepackage{booktabs} 
\usepackage{amsfonts}
\usepackage{nicefrac}
\usepackage{microtype} 
\usepackage[dvipsnames]{xcolor}
\usepackage{enumitem}
\usepackage{amsthm}
\usepackage{amsmath}
\usepackage{bm}

\usepackage{amssymb}
\usepackage{mathtools}
\usepackage[linesnumbered,ruled,vlined]{algorithm2e}
\usepackage{makecell}
\usepackage{multirow}
\usepackage{booktabs}
\usepackage{wrapfig}
\usepackage{extarrows}
\usepackage{colortbl}
\usepackage{tikz}
\usepackage{pifont}
\usepackage[most]{tcolorbox}
\usepackage{balance}
\usepackage{adjustbox}
\usepackage{wasysym}
\usepackage{textcase}

\newcommand{\modelname}{DyGFM}

\newcommand{\eg}{\textit{e.g.}}

\newcommand{\etc}{\textit{etc.}}

\theoremstyle{definition}

\newtheorem{prop}{Proposition}

\newtcolorbox{mathbox}{
  colframe=black,
  colback=white,
  sharp corners,
  boxrule=0.5pt,
  breakable,
  left=1.5pt, right=1.5pt, top=1.5pt, bottom=1.5pt
}

\begin{document}

\title{Decoupled and Divergence-Conditioned Prompt for Multi-domain Dynamic Graph Foundation Models}

\author{Haonan~Yuan,~Qingyun~Sun,~\IEEEmembership{Member,~IEEE},~Junhua~Shi,~Xingcheng~Fu,~\IEEEmembership{Member,~IEEE},\\Jianxin~Li,~\IEEEmembership{Senior Member,~IEEE},~and~Philip~S.~Yu,~\IEEEmembership{Life~Fellow,~IEEE}
\thanks{H. Yuan, Q. Sun, J. Shi, and J. Li are with the School of Computer Science and Engineering, Beihang University, Beijing 100191, China (Email: \{yuanhn, sunqy, shijunhua, lijx\}@buaa.edu.cn).}
\thanks{X. Fu is with the Key Lab of Education Blockchain and Intelligent Technology, Ministry of Education, Guangxi Normal University, Guilin 541004, China (E-mail: fuxc@gxnu.edu.cn).}
\thanks{P. S. Yu is with the Department of Computer Science, University of Illinois at Chicago, Chicago 60607, USA (E-mail: psyu@uic.edu).}
\thanks{Manuscript received May 11, 2026; revised June 18, 2026.}}

\markboth{IEEE TRANSACTIONS ON PATTERN ANALYSIS AND MACHINE INTELLIGENCE,~Vol.~16, No.~8, August~2026}{Yuan \MakeLowercase{\textit{et al.}}: Decoupled and Divergence-Conditioned Prompt for Multi-domain Dynamic Graph Foundation Models}


\maketitle

\begin{abstract}
  Dynamic graphs are ubiquitous in real-world systems, and building generalizable dynamic Graph Foundation Models has become a frontier in graph learning. However, dynamic graphs from different domains pose fundamental challenges to unified modeling, as their semantic and temporal patterns are inherently inconsistent, making the multi-domain pre-training difficult. Consequently, the widely used ``pretrain-then-finetune'' paradigm often suffers from severe negative knowledge transfer. To the best of our knowledge, there exists no multi-domain dynamic GFM.
  In this work, we propose \textbf{\modelname}, a \underline{\textbf{\smash{Dy}}}namic \underline{\textbf{G}}raph \underline{\textbf{F}}oundation \underline{\textbf{M}}odel over multiple domains based on decoupled and divergence-conditioned prompting.
  To disentangle transferable semantics from the domain-specific dynamics, we introduce a dual-branch pre-training strategy with semantic-temporal decoupling.
  To alleviate negative transfer during domain adaptation, we further develop a cross-domain routing mechanism with divergence-aware expert selection.
  To enable efficient downstream fine-tuning, we design a divergence-conditioned prompt generator that injects lightweight, learnable graph prompts tailored to semantic and temporal traits.
  Extensive experiments on continuous dynamic graph benchmarks demonstrate that \modelname~consistently outperforms 12 state-of-the-art baselines on both node classification and link prediction tasks, achieving superior effectiveness and efficiency.
\end{abstract}

\begin{IEEEkeywords}
  Graph foundation models, dynamic graph representation learning, multi-domain graph pre-training, and graph prompt learning.
\end{IEEEkeywords}

\input{1_intro}
\input{2_related_work}
\input{3_notation}
\input{4_method}
\input{5_experiment}
\input{6_conclusion}

\appendices

\input{appendix/1_notations}




\bibliographystyle{IEEEtran}
\bibliography{ref.bib}

\begin{IEEEbiography}[{\includegraphics[width=1in,height=1.2in,clip,keepaspectratio]{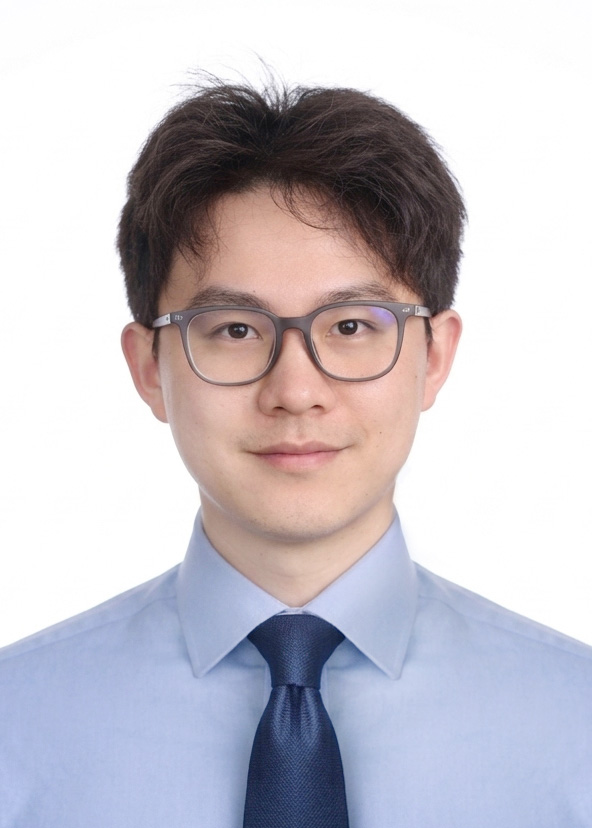}}]{Haonan Yuan} is currently a Ph.D. candidate at the Beijing Advanced Innovation Center for Big Data and Brain Computing at Beihang University. His research interests include graph foundation models, dynamic graph learning, and OOD generalization. He has published several papers on IEEE TPAMI, ICML, NeurIPS, ICLR, WWW, AAAI, \textit{etc}.
\end{IEEEbiography}

\begin{IEEEbiography}
[{\includegraphics[width=1in,height=1.2in,clip,keepaspectratio]{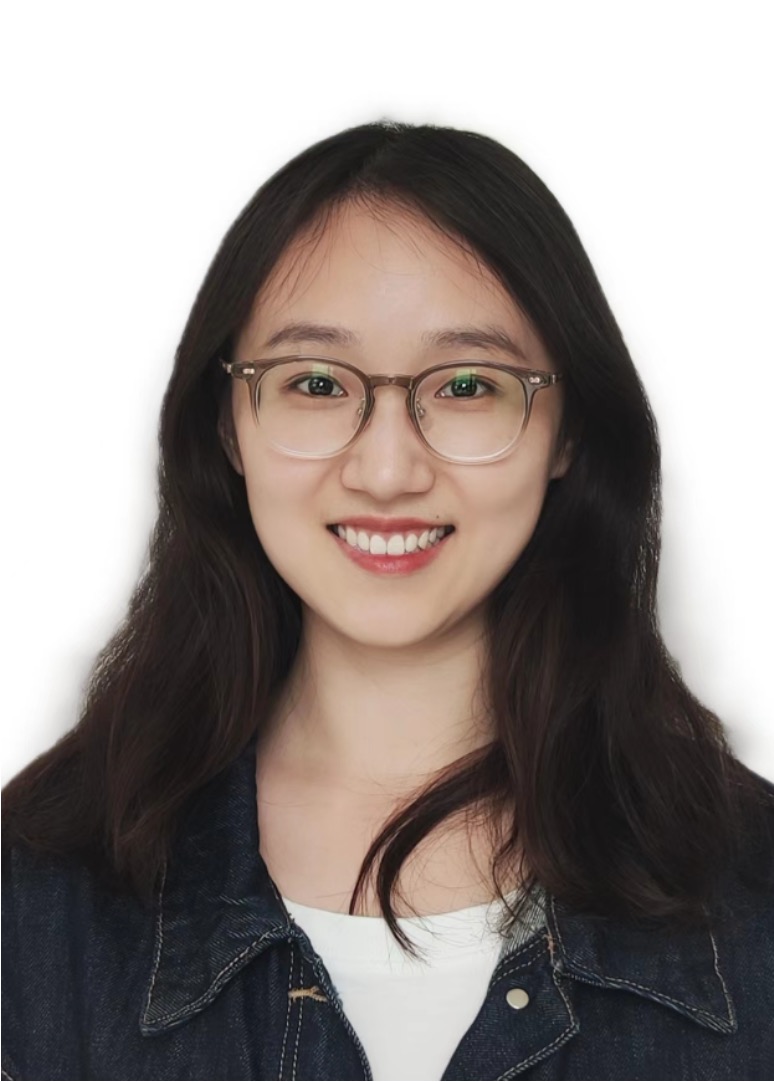}}]{Qingyun Sun} is currently an Associate Professor at the School of Computer Science and Engineering at Beihang University. Her research interests include graph machine learning and data mining. She has published several papers on IEEE TPAMI, IEEE TKDE, WWW, AAAI, ICDM, CIKM, \textit{etc}.
\end{IEEEbiography}

\begin{IEEEbiography}[{\includegraphics[width=1in,height=1.2in,clip,keepaspectratio]{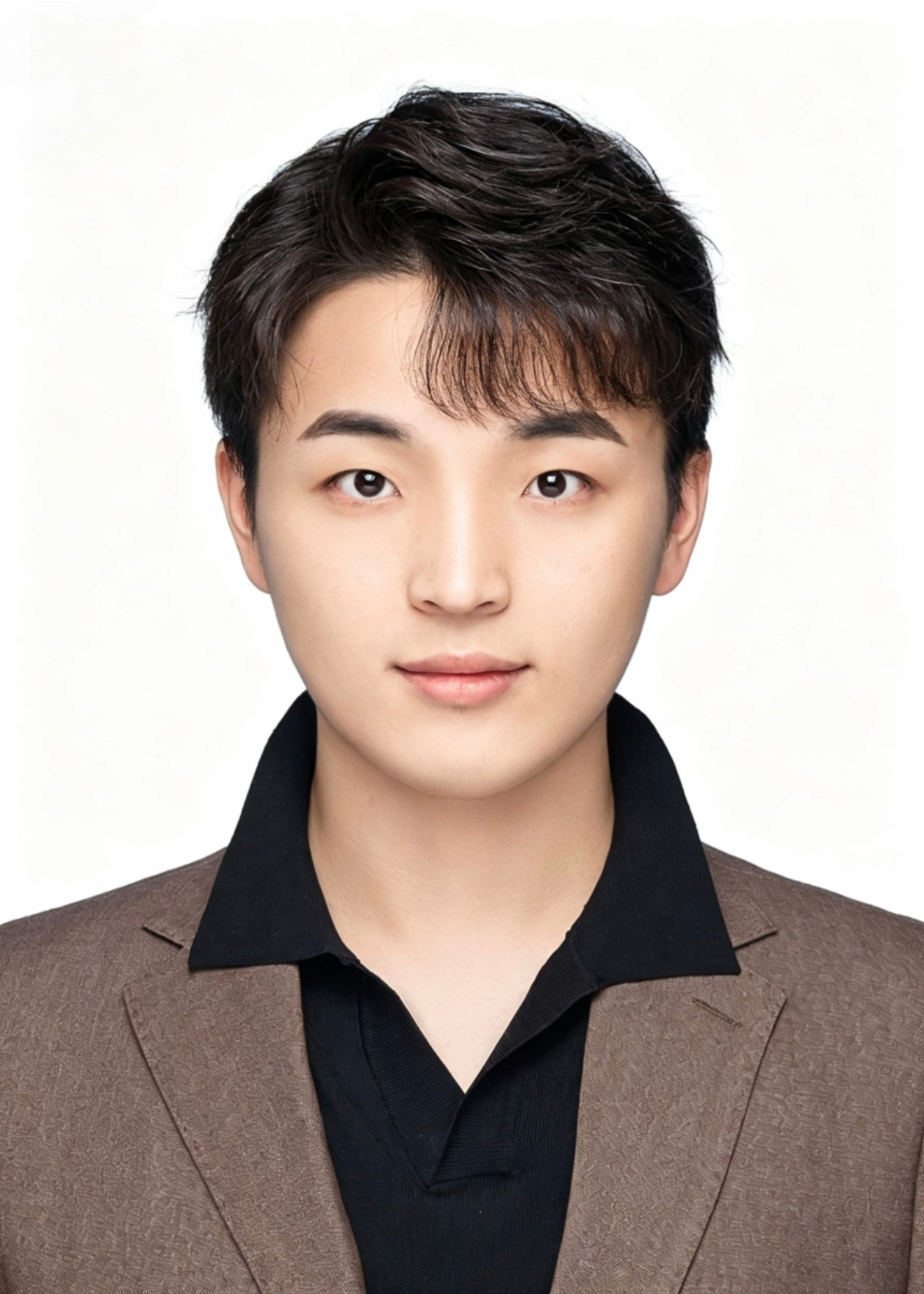}}]{Junhua Shi} is currently a Ph.D. candidate at the Beijing Advanced Innovation Center for Big Data and Brain Computing at Beihang University. His research interests include trustworthy graph foundation models.
\end{IEEEbiography}

\begin{IEEEbiography}[{\includegraphics[width=1in,height=1.2in,clip,keepaspectratio]{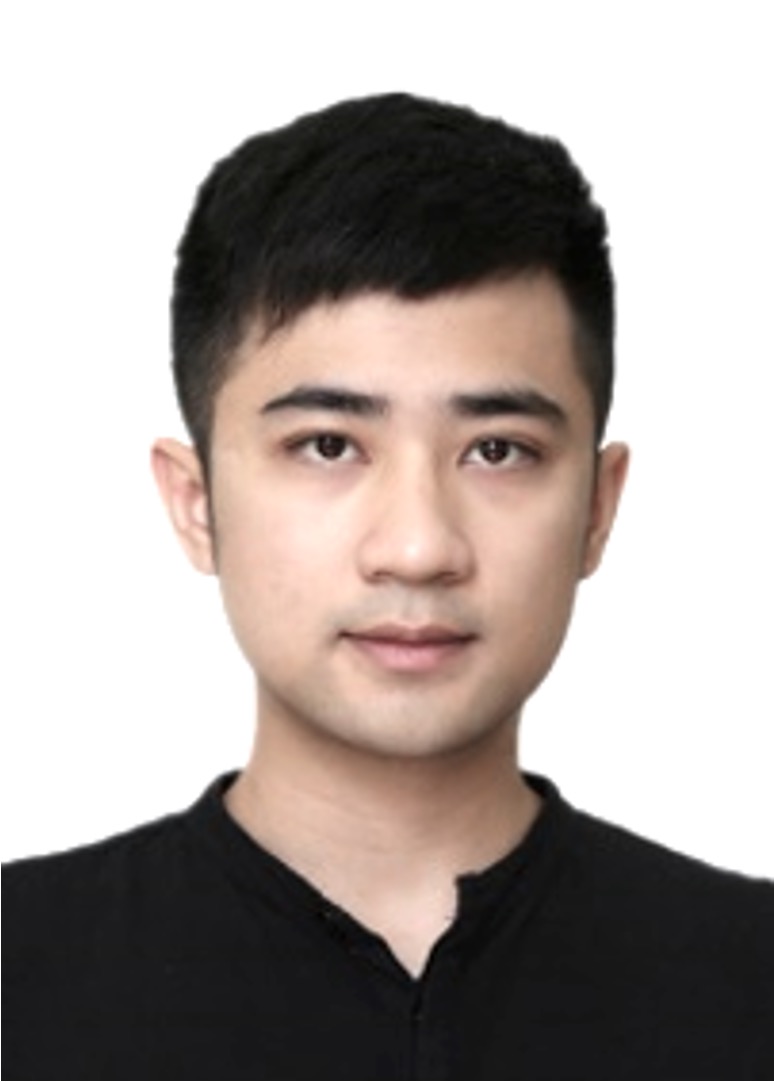}}]{Xingcheng Fu}  is currently an Associate Professor at the Key Lab of Education Blockchain and Intelligent Technology at Guangxi Normal University. His research interests include graph representation learning, complex networks, and social network analysis. He has published several papers on IEEE TKDE, WWW, AAAI, ICDM, CIKM, \textit{etc}.
\end{IEEEbiography}

\begin{IEEEbiography}
[{\includegraphics[width=1in,height=1.2in,clip,keepaspectratio]{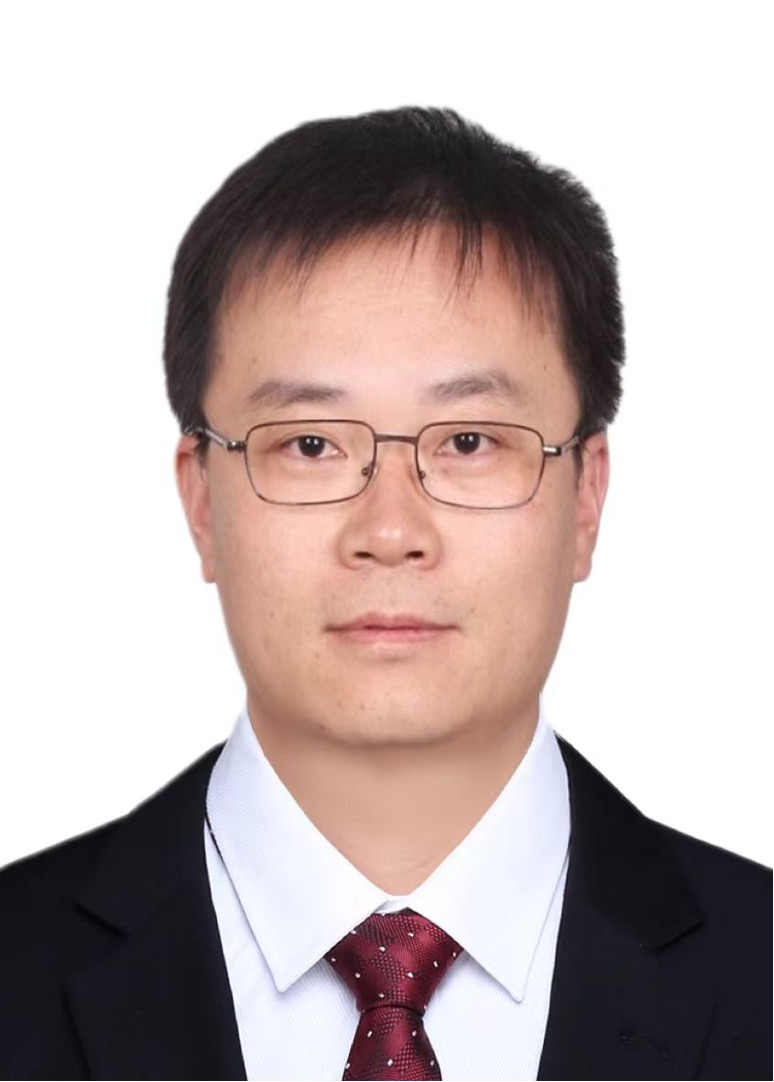}}]{Jianxin Li} is currently a Professor with the School of Computer Science and Engineering, and Beijing Advanced Innovation Center for Big Data and Brain Computing in Beihang University. His current research interests include social networks, machine learning, big data, and trustworthy computing. Dr. Li has published research papers in top-tier journals and conferences, including the IEEE TKDE, TDSC, JAIR, ACM TOIS, TKDD, KDD, AAAI, WWW, \textit{etc}. 
\end{IEEEbiography}

\begin{IEEEbiography}
[{\includegraphics[width=1in,height=1.2in, clip,keepaspectratio]{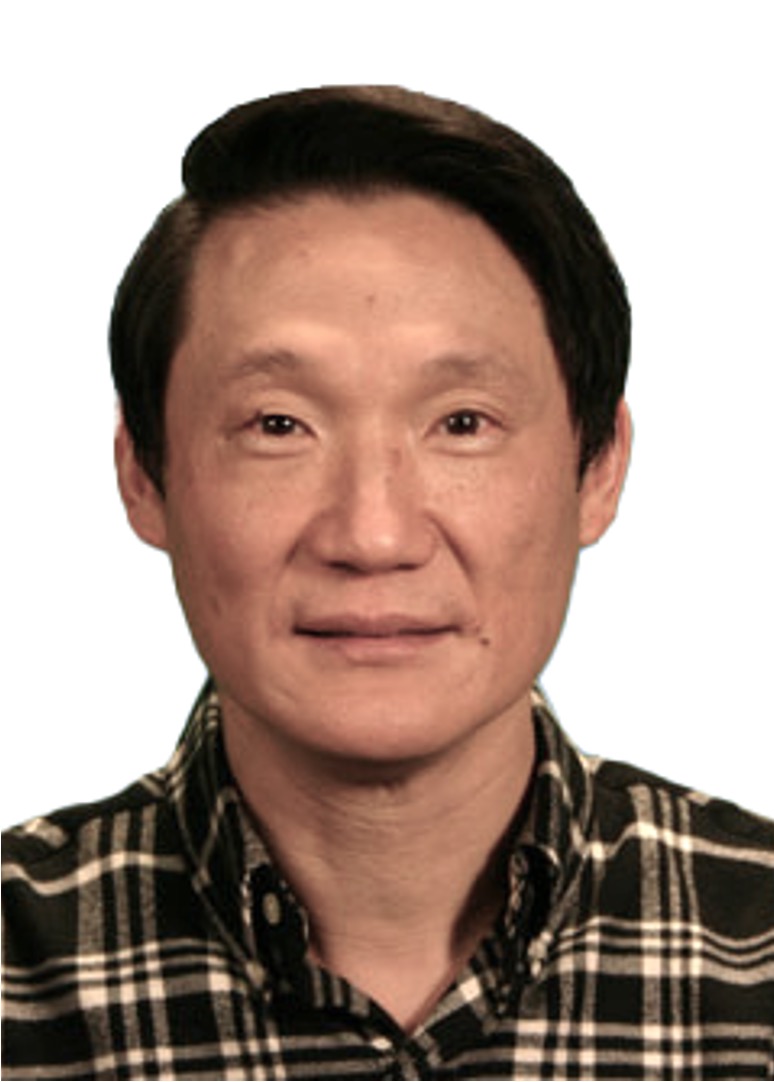}}]{Philip S. Yu} is a Distinguished Professor and the Wexler Chair in Information Technology at the Department of Computer Science, University of Illinois at Chicago. Before joining UIC, he was at the IBM Watson Research Center, where he built a world-renowned data mining and database department. He is a Fellow of the ACM and IEEE. Dr. Yu was the Editor-in-Chief of ACM TKDD (2011-2017) and IEEE TKDE (2001-2004).
\end{IEEEbiography}

\vfill

\end{document}

%% file: 1_intro.tex
\section{Introduction}
\label{sec:intro}
Graphs provide a powerful abstraction for modeling complex relationships among entities, and have been widely adopted in diverse domains, including social networks~\cite{newman2002random, fan2019graph, sankar2021graph}, recommendation systems~\cite{ying2018graph, wu2019session, wang2019knowledge}, knowledge graphs~\cite{wang2014knowledge, wang2017knowledge, zhang2022knowledge}, and biological systems~\cite{lin2021kgnn, gligorijevic2021structure, jha2022prediction}, \etc~In many real-world scenarios, node interactions are inherently temporal, with graph structures and node features evolving continuously over time. Such dynamic graphs are commonly characterized by asynchronous edge events~\cite{fan2022towards, wen2022trend, liu2023tmac}, time-sensitive node states~\cite{xu2019spatio, singer2019node, huang2023temporal}, and evolving topological patterns~\cite{zhang2020spatio, jin2022neural, yang2023time}. To capture these temporal dependencies, Dynamic Graph Neural Networks (DGNNs) have emerged as a major paradigm, enabling temporal encoding, event-aware message propagation, and inductive generalization over time-evolving data streams~\cite{xhonneux2020continuous, skarding2021foundations, zhang2022dynamic}. Recent advances in DGNNs have further enabled effective learning on continuous-time dynamic graphs, with applications in user behavior prediction~\cite{wang2020calendar, liu2024tp, mao2025boosting}, fraud detection~\cite{lu2022bright, duan2024dga, dong2024dynamic}, and molecular interaction modeling~\cite{zheng2020predicting, li2022graph, li2025spatial}, \etc

Despite their success, existing DGNNs are typically developed under task- or domain-specific training protocols, and thus often struggle to generalize across domains and tasks~\cite{chen2024prompt, yu2025nodetime}. This limitation resembles the early development of language~\cite{chang2024survey} and vision~\cite{zhang2024vision} models before the emergence of foundation models~\cite{achiam2023gpt, khan2022transformers}. In the graph domain, Graph Foundation Models (GFMs) have recently shown that large-scale pre-training on diverse static graphs can learn transferable representations that generalize well across various downstream tasks~\cite{mao2024position, shi2024graph, zi2024prog}. Instead of training a separate model for each task, an ideal GFM is pre-trained once on multi-domain graphs and then adapted to different tasks with limited supervision~\cite{shi2024lecture, bommasani2021opportunities}, following the widely adopted ``pretrain-then-finetune'' paradigm~\cite{tang2024graphgpt, he2024unigraph, lachi2024graphfm}. Inspired by these advances, researchers have begun to explore dynamic graph foundation models, which aim to extend this paradigm from static graphs to time-evolving graphs and support multi-domain, multi-task adaptation~\cite{yu2025nodetime}.
However, building a dynamic graph foundation model over multiple domains remains highly challenging. Unlike static graphs, which can often be integrated through structural unification or feature-space alignment, dynamic graphs carry domain-specific semantics, incompatible temporal granularities, and divergent evolution patterns. These discrepancies make it difficult to learn unified representations from multiple dynamic graphs during pre-training. Moreover, when the pre-trained model is adapted to downstream domains, mismatched semantic or temporal patterns may introduce irrelevant source-domain knowledge and lead to negative transfer. \textbf{Specifically:}


\begin{figure}
    \centering
    \includegraphics[width=1\linewidth]{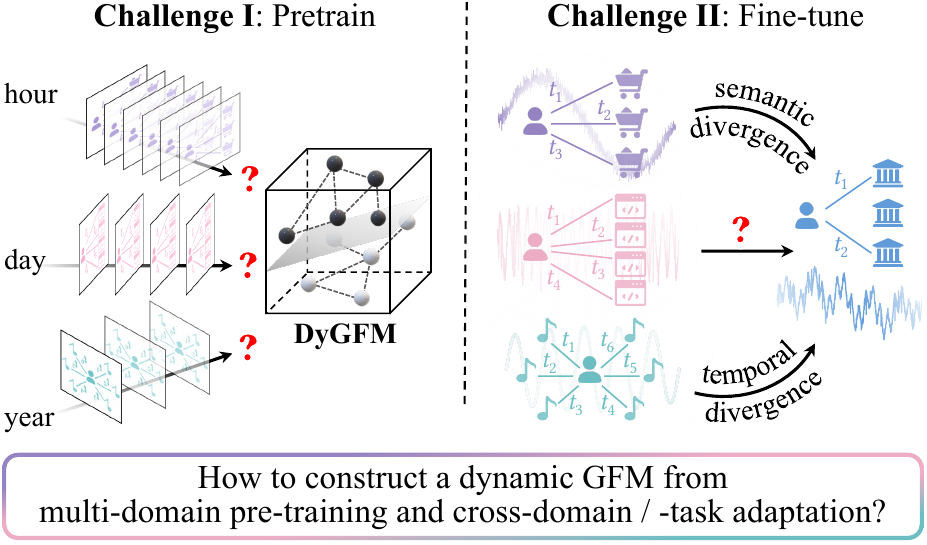}
    \caption{Challenges of constructing a dynamic GFM.}
    \label{fig:intro_compare}
    \vspace{-0.35cm}
\end{figure}

\textbf{Challenge I: Misaligned temporal semantics across domains make multi-domain pre-training difficult.}
Unlike static graphs that can be integrated through structural unification, dynamic graphs encode temporal order, relative intervals, and event causality as essential parts of their semantics. For example, user-item interactions in an e-commerce platform occur at millisecond-level timestamps, whereas citation networks typically evolve over years. Since most dynamic graph models rely on relative time encodings or continuous-time propagation mechanisms, directly combining domains with different temporal granularities may introduce ambiguity and distort the original semantics. Without a shared or aligned notion of time, heterogeneous dynamic graphs cannot be simply fused into a unified pre-training corpus. Consequently, the idea of scaling via data aggregation, which is central to the success of static GFMs, becomes non-trivial in dynamic settings.

\textbf{Challenge II: Semantic-temporal divergence across domains leads to negative transfer during adaptation.}
Compared with the static settings, dynamic graphs differ not only in interaction patterns and feature distributions, but also in their temporal evolution patterns. For instance, a user churn dataset may reflect short-term behavioral bursts, while a co-authorship network usually captures long-term collaboration trends. When the widely adopted ``pretrain-then-finetune'' paradigm is applied to such heterogeneous dynamic domains, the model may transfer temporal priors or semantic cues from irrelevant source domains, thereby degrading downstream performance on the target domain. This issue becomes particularly pronounced under limited supervision, where the model has insufficient target-domain labels to correct misleading inductive biases. Without explicitly modeling domain divergence, a unified dynamic graph foundation model may generalize poorly and even negatively affect domain-specific adaptation.

Several recent works have attempted to improve generalization in dynamic graph learning through prompt-based adaptation towards establishing DyGFMs. For instance, DyGPrompt~\cite{yu2025nodetime} introduces a node- and timestamp-conditioned prompt generator, allowing fine-tuning of pre-trained temporal graph models on downstream tasks. Similarly, TIGPrompt~\cite{chen2024prompt} designs discrete temporal prompts for multi-task generalization. While effective within single-domain settings, these methods generally assume that time semantics remain consistent across all training and testing data. As a result, they do not account for the structural and temporal divergence that naturally arises across multiple heterogeneous domains. Moreover, most existing approaches rely on end-to-end fine-tuning of model backbones or restrict prompting to shallow embeddings, thereby limiting scalability and flexibility, which remain major bottlenecks in building practical dynamic GFMs.

To tackle the aforementioned challenges, we propose \textbf{\modelname}, a \underline{\textbf{Dy}}namic \underline{\textbf{G}}raph \underline{\textbf{F}}oundation \underline{\textbf{M}}odel over multiple domains by decoupled and divergence-conditioned prompting. \modelname~is built upon three key modules.
To disentangle transferable semantics from domain-specific dynamics, we propose a dual-branch pre-training strategy with semantic-temporal decoupling, which separately captures static feature semantics and fine-grained temporal patterns.
To alleviate negative transfer during domain adaptation, we design a cross-domain routing mechanism with divergence-aware expert selection, which selectively aggregates relevant source-domain experts according to semantic and temporal discrepancies.
To enable efficient downstream fine-tuning, we further introduce a divergence-conditioned prompt generator that injects lightweight, learnable graph prompts tailored to domain-specific semantic and temporal patterns.
\textbf{Our contributions are:}
\begin{itemize}[leftmargin=1.5em]
    \item We propose a multi-domain dynamic graph foundation model named \modelname. To the best of our knowledge, this is the first work that successfully builds a multi-domain dynamic GFM by explicitly addressing both multi-domain pre-training and negative knowledge transfer.
    \item Its architecture consists of semantic-temporal decoupled pre-training, divergence-aware expert routing, and divergence-conditioned prompting. This design enables \modelname~to handle temporal semantic incompatibility during pre-training and mitigate negative transfer during domain adaptation.
    \item Extensive experiments on continuous dynamic graph benchmarks demonstrate that \modelname~consistently outperforms 12 state-of-the-art baselines in node classification and link prediction, achieving superior effectiveness and efficiency.
\end{itemize}

%% file: 2_related_work.tex
\section{Related Work}
\label{sec:related_work}

\subsection{Dynamic Graph Learning}
Dynamic graph learning aims to learn representations that evolve over time. Unlike static graphs, dynamic graphs require models to capture both structural and temporal dependencies induced by timestamped interactions, event intervals, and node states. Existing works can be grouped into structure-oriented, temporal-oriented, and application-oriented methods.

\textbf{Structure-oriented methods.}
Structure-oriented methods mainly focus on preserving evolving graph topology and capturing time-dependent structural patterns. One representative line is the temporal random walk-based methods. CTDNE~\cite{nguyen2018continuous} extends random walks to continuous-time dynamic graphs by enforcing chronological constraints during walk sampling. CAWs~\cite{wanginductive} further introduces causal anonymous walks to improve inductive temporal representation learning. NeurTW~\cite{jin2022neural} enhances temporal walks with neural message passing and motif-aware temporal structures. PINT~\cite{souza2022provably} studies the expressiveness of temporal graph networks from the perspective of temporal walks, while TPNet~\cite{lu2024improving} improves temporal link prediction through temporal walk matrix projection. Another representative line is the temporal neighborhood-based methods. TGAT~\cite{xu2020inductive} introduces time encoding and attention-based temporal neighborhood aggregation. GraphMixer~\cite{congwe2023} uses simple MLP-based architectures to aggregate temporal neighborhood information efficiently. DyGFormer~\cite{yu2023towards} tokenizes historical interactions and models them with transformer-style sequence encoding. CNE-N~\cite{cheng2024co} designs co-neighbor encoding for efficient dynamic link prediction, while SEAN~\cite{zhang2024towards} and RepeatMixer~\cite{zou2024repeat} further improve temporal neighborhood modeling through adaptive neighborhood selection and repeat-aware sampling.
These methods are effective in modeling local temporal structures and dynamic neighborhood dependencies, but they usually rely on sampled walks, historical neighborhoods, or local interaction windows, which may limit their scalability and long-range temporal modeling ability.

\textbf{Temporal-oriented methods.}
Temporal-oriented methods emphasize the modeling of temporal dependency itself, especially under continuous-time and irregular interaction settings. RNN-based methods such as JODIE~\cite{li2019predicting} and DyGNN~\cite{ma2020streaming} update node representations sequentially according to streaming interactions, thereby capturing the evolving states of nodes over time. RTRGN~\cite{chen2023recurrent} further revises temporal information during recurrent aggregation to better incorporate historical neighbor states. Temporal point process-based methods, including HTNE~\cite{zuo2018embedding}, DyRep~\cite{trivedi2019dyrep}, M2DNE~\cite{lu2019temporal}, TREND~\cite{wen2022trend}, and EasyDGL~\cite{chen2024easydgl}, model the occurrence of future interactions through conditional intensity functions, which enable them to capture asynchronous events and temporal decay effects. Memory-based methods such as TGN~\cite{rossi2020temporal}, NAT~\cite{luo2022neighborhood}, TIGER~\cite{zhang2023tiger}, PRES~\cite{supres2020}, MemMap~\cite{ji2024memmap}, and MSPipe~\cite{sheng2024mspipe} maintain node-level memory states and update them through event-driven message passing, supporting online representation learning over streaming graphs. More recently, frequency-domain methods such as FreeDyG~\cite{tian2024freedyg} and BandRank~\cite{li2025ranking} transform temporal signals into the frequency space to capture periodic patterns, long-term trends, and multi-scale temporal variations. These methods provide powerful tools for modeling temporal evolution, but the learned temporal patterns are often tied to specific datasets or domains. As a result, directly transferring with different temporal resolutions, event generation mechanisms, or interaction patterns remains non-trivial.

\textbf{Application-oriented methods.}
Application-oriented methods adapt dynamic graph learning to specific downstream scenarios. For sequential recommendation, methods such as TGSRec~\cite{fan2021continuous}, DGSR~\cite{zhang2022dynamic}, PTGCN~\cite{huang2023position}, TCGC~\cite{tang2025tcgc}, NeuFilter~\cite{xia2024neural}, and TGCL4SR~\cite{zhang2024temporal} model dynamic user-item interactions and evolving preferences. For anomaly detection and community discovery, methods such as TagGen~\cite{zhou2020data}, SAD~\cite{tian2023sad}, GeneralDyG~\cite{yang2025generalizable}, CDGP~\cite{ji2023community}, TGC~\cite{liudeep2024}, and DyG-MF~\cite{li2025revisiting} capture abnormal temporal behaviors or evolving community structures. Although these methods achieve strong performance in their target applications, their architectures and objectives are usually task-specific, offering limited support for unified pre-training and cross-domain adaptation.

Overall, existing dynamic graph learning methods have advanced the modeling of temporal interactions and evolving structures. However, most of them are developed for single-domain or task-specific scenarios, with limited consideration of multi-domain pre-training, temporal scale inconsistency, and negative transfer during cross-domain adaptation. These limitations motivate our study on dynamic graph foundation modeling over multiple domains.

\subsection{Static Graph Foundation Models.}
Graph Foundation Models (GFMs) aim to learn transferable knowledge from large-scale graph data and generalize to diverse downstream tasks and domains. According to the backbone architecture, existing static GFMs can be broadly grouped into GNN-based GFMs, LLM-based GFMs, and hybrid GFMs.

\textbf{GNN-based GFMs.}
GNN-based GFMs use graph-native encoders as the main backbone, where transferable knowledge is learned through message passing, graph transformers, structural tokenization, or domain-aware representation alignment. Early prompt-based methods, such as GraphPrompt~\cite{liu2023graphprompt}, HGPrompt~\cite{hgprompt}, MultiGPrompt~\cite{multigprompt}, and UniPrompt~\cite{uniprompt}, adapt pre-trained GNN encoders to downstream tasks through learnable prompts or unified task templates. Subsequent multi-domain methods, such as SAMGPT~\cite{yu2025samgpt}, GCOPE~\cite{zhao2024all}, MDGPT~\cite{mdgpt}, MDGCL~\cite{mdgcl}, BRIDGE~\cite{yuanmuch}, and MDGFM~\cite{mdgfm}, further improve cross-domain transfer by introducing contrastive alignment, domain tokens, source-domain selection, or spectral regularization. More recent methods explore specialized graph-native architectures. For example, GMoPE~\cite{gmope} introduces expert prompts for adaptive transfer, while GRAVER~\cite{graver} constructs transferable graph vocabularies through hierarchical routing. In addition, GraphMoRE~\cite{graphmore}, RiemannGFM~\cite{riemanngfm}, GraphGlue~\cite{graphglue}, and CRGFM~\cite{CRGFM} incorporate geometric or Riemannian modeling to better capture complex structural patterns. These methods provide effective graph-native transfer mechanisms, but they are mainly designed for static graphs and do not explicitly model temporal order, time granularity, or dynamic evolution across domains.

\textbf{LLM-based GFMs.}
LLM-based GFMs reformulate graph learning into language-compatible formats and use large language models as the primary backbone. Representative methods include LangGFM~\cite{langgfm} and PromptGFM~\cite{promptgfm}, which textualize graph structures, node attributes, or graph tasks into language-readable inputs and perform graph reasoning or prediction in the language space. Other methods further convert graphs into structured formats such as JSON, XML, or tables, enabling graph data to reuse the general reasoning and instruction-following abilities of pretrained foundation models~\cite{eremeev2025turning}. This line benefits from the semantic knowledge and flexible task interfaces of LLMs, especially for text-attributed graphs or graph-language tasks. However, LLM-based GFMs often rely on textualized graph descriptions and may lose fine-grained structural or temporal information during the transformation process. Therefore, they are not naturally suited for continuous-time dynamic graphs, where irregular event intervals and temporal dependencies are central to representation learning.

\textbf{GNN-LLM Hybrid GFMs.}
Hybrid GFMs combine graph encoders with language encoders or LLMs, aiming to exploit both graph-native structural modeling and language-level semantic reasoning. OFA~\cite{ofa} and UniGraph~\cite{he2024unigraph} provide early attempts to unify graph tasks through graph-language interfaces. GraphGPT~\cite{tang2024graphgpt} and LLaGA~\cite{llaga} connect graph encoders with LLMs through projection modules, while GraphCLIP~\cite{graphclip} and BooG~\cite{boog} align graph representations with textual supervision or class-level semantic information. UniGraph2~\cite{unigraph2} further extends this paradigm to multimodal graph settings, and GOFA~\cite{gofa} integrates graph computation with frozen language models for graph-aware generation. Compared with purely GNN-based or LLM-based methods, hybrid GFMs offer a more flexible interface for combining structural and semantic knowledge. Nevertheless, most hybrid GFMs still focus on static graphs or text-attributed graphs, and their pre-training and adaptation mechanisms do not explicitly address temporal scale inconsistency, domain-specific evolution patterns, or negative transfer in multi-domain dynamic graph learning.

Overall, static GFMs have significantly advanced transferable graph representation learning through graph-native pre-training, language-based reformulation, and graph-language hybrid architectures. However, they generally lack explicit mechanisms for modeling continuous temporal evolution and aligning heterogeneous time semantics across domains. As a result, directly extending static GFMs to dynamic graph scenarios may lead to temporal semantic distortion during pre-training and negative transfer during cross-domain adaptation. This motivates the development of dynamic graph foundation models that can jointly support multi-domain pre-training, semantic-temporal decoupling, and efficient downstream fine-tuning.

\input{table/compare}

\subsection{Dynamic Graph Foundation Models.}
Compared with static GFMs, dynamic graph foundation models remain much less explored. Existing related studies mainly focus on prompt-based adaptation for pre-trained dynamic graph models rather than building foundation models over multiple dynamic graph domains. TIGPrompt~\cite{chen2024prompt} introduces a temporal prompt generator to bridge the temporal and semantic gaps between pre-training and downstream prediction on temporal interaction graphs, and supports lightweight prompt tuning over frozen TIG backbones. DyGPrompt~\cite{yu2025nodetime} further develops a pre-training and prompt learning framework for dynamic graphs, where node and time prompts are used to alleviate the discrepancy between link-prediction pre-training and downstream tasks, and dual condition-nets are designed to capture evolving node-time patterns. More recently, DDGPrompt~\cite{peng2025data} proposes a data-centric prompt tuning strategy that refines pre-trained node embeddings through temporal bias, edge weight, and feature mask prompts, aiming to improve few-shot adaptability across downstream tasks. Although these studies improve the adaptability of dynamic graph models, they are still mainly centered on single-domain prompt tuning or task-level transfer. They do not explicitly address multi-domain pre-training over dynamic graphs with incompatible temporal semantics, nor do they model semantic-temporal divergence between source and target domains for negative transfer mitigation. Therefore, to the best of our knowledge, multi-domain dynamic graph foundation modeling remains largely unexplored.

%% file: table/compare.tex
\begin{table*}[!t]
\setlength{\tabcolsep}{2.5pt}
  \centering
  \caption{Comparison with representative Graph Foundation Models.}
  \resizebox{\textwidth}{!}{
    \begin{tabular}{llcccccc}
    \toprule
    \textbf{Category} & \textbf{Representative Methods} & \makecell{\textbf{Dynamic}\\[2pt]\textbf{Graph Modeling}} & \makecell{\textbf{Multi-domain}\\[2pt]\textbf{Pre-training}} & \makecell{\textbf{Temporal}\\[2pt]\textbf{Alignment}} & \makecell{\textbf{Cross-domain}\\[2pt]\textbf{Adaptation}} & \makecell{\textbf{Negative Transfer}\\[2pt]\textbf{Mitigation}} & \makecell{\textbf{Efficient}\\[2pt]\textbf{Fine-tuning}} \\
    \midrule
    GNN-based Static GFMs & \makecell[l]{GraphPrompt~\cite{liu2023graphprompt}, SAMGPT~\cite{yu2025samgpt},\\[2pt]GCOPE~\cite{zhao2024all}, BRIDGE~\cite{yuanmuch},\\[2pt]MDGFM~\cite{mdgfm}, GRAVER~\cite{graver}} & $\ocircle$     & $\CIRCLE$     & $\ocircle$     & $\CIRCLE$     & $\RIGHTcircle$     & $\CIRCLE$ \\
    \midrule
    LLM-based Static GFMs & LangGFM~\cite{langgfm}, PromptGFM~\cite{promptgfm} & $\ocircle$     & $\ocircle$     & $\ocircle$     & $\RIGHTcircle$     & $\ocircle$     & $\RIGHTcircle$ \\
    \midrule
    Hybrid Static GFMs & \makecell[l]{OFA~\cite{ofa}, GraphGPT~\cite{tang2024graphgpt},\\[2pt]LLaGA~\cite{llaga}, GraphCLIP~\cite{graphclip},\\[2pt]GOFA~\cite{gofa}} & $\ocircle$     & $\CIRCLE$     & $\ocircle$     & $\RIGHTcircle$     & $\ocircle$     & $\RIGHTcircle$ \\
    \midrule
    \multirow{3}[4]{*}{Dynamic GFMs} & TIGPrompt~\cite{chen2024prompt}, DDGPrompt~\cite{peng2025data} & $\CIRCLE$     & $\ocircle$     & $\ocircle$     & $\ocircle$     & $\ocircle$     & $\CIRCLE$ \\
    \cmidrule{2-8}          & DyGPrompt~\cite{yu2025nodetime} & $\CIRCLE$     & $\ocircle$     & $\RIGHTcircle$     & $\CIRCLE$     & $\ocircle$     & $\CIRCLE$ \\
    \cmidrule{2-8}          & \modelname~\textbf{(ours)} & $\CIRCLE$     & $\CIRCLE$     & $\CIRCLE$    & $\CIRCLE$     & $\CIRCLE$     & $\CIRCLE$ \\
    \bottomrule
    \end{tabular}%
}
\label{tab:compare}
\end{table*}

%% file: 3_notation.tex
\section{Notations and Preliminaries}
\label{sec:notation}

\subsection{Notations}
Given a continuous-time dynamic graph $\mathcal{G} = \{\mathcal{V}, \mathcal{E}, \mathcal{T}\}$, where $\mathcal{V}$ denotes the node set, $\mathcal{T}$ denotes the continuous time domain, and $\mathcal{E}=\{(u,v,t)\}$ denotes the temporal edge set. Each temporal edge $(u,v,t)\in\mathcal{E}$ indicates that node $u$ interacts with node $v$ at timestamp $t\in\mathcal{T}$. The node feature matrix is denoted by $\mathbf{X}\in\mathbb{R}^{|\mathcal{V}|\times d_0}$, where $d_0$ is the input feature dimension. Let $\mathbf{z}_v^t \in \mathbb{R}^{d}$ denote the embedding of node $v$ at time $t$, where $d$ is the hidden dimension.

Dynamic Graph Neural Networks (DGNNs) serve as the backbone architecture for dynamic graph foundation modeling. In general, DGNNs update node embeddings by aggregating messages from historical temporal neighbors. Specifically, for node $v$ at time $t$, the aggregation is performed over previous interactions $(u,v,t^\prime)\in\mathcal{E}$ with $t^\prime<t$:
\begin{equation}
    \!\!\mathbf{z}_v^t\!=\!\operatorname{AGG}\!\left(
    \!\left\{
    \!\phi\!\left(
    \mathbf{z}_u^{t^\prime},
    \boldsymbol{\mathcal T}(t\!-\!t^\prime),
    \mathbf{e}_{uv}
    \!\right)
    \!\mid\! (u,v,t^\prime)\!\in\!\mathcal{E},\ t^\prime\!<\!t
    \right\}\!
    \right),\!\!
\end{equation}
where $\mathbf{z}_u^{t^\prime}$ denotes the historical embedding of node $u$ at time $t^\prime$, $\boldsymbol{\mathcal T}(\cdot)$ is a relative time encoder that maps the time interval $t-t^\prime$ into a temporal representation, $\mathbf{e}_{uv}$ denotes the edge feature between $u$ and $v$ when available, $\phi(\cdot)$ is a temporal message function, and $\operatorname{AGG}(\cdot)$ is a temporal neighborhood aggregator, \eg, the attention-based aggregator in TGAT~\cite{xu2020inductive}.

\subsection{Pre-training and Few-shot Fine-tuning}
Given $n$ dynamic graphs $\left\{\mathcal{G}_i^{\mathcal{S}}\right\}_{i=1}^n$ from source domains $\left\{D_i^{\mathcal{S}}\right\}_{i=1}^n$ with their corresponding labels $\left\{Y_i^{\mathcal{S}}\right\}_{i=1}^n$, the pre-training stage aims to learn a dynamic graph learner $h=g\circ f$, where $f$ denotes the dynamic graph encoder that extracts time-aware node representations, and $g$ denotes the task head. After convergence, the learned parameters $\boldsymbol{\theta}^{\star}$ are frozen and reused for downstream adaptation.

During fine-tuning, given a target dynamic graph $\mathcal{G}^{\mathcal{T}}$ from a target domain $D^{\mathcal{T}}$ (seen or unseen), only $m$ labeled samples are available as the support set under the $m$-shot setting, where $m\ll \sum_{i=1}^{n}|\mathcal{V}_i|$. The goal is to adapt the frozen $f$ to the target domain and predict labels for the unlabeled query set. In this work, adaptation is achieved by injecting learnable prompts into $f$, while keeping $\boldsymbol{\theta}^{\star}$ unchanged.

%% file: 4_method.tex
\begin{figure*}[!t]
    \centering
        \begin{minipage}{\textwidth}
            \hspace{-0.2cm}
            \begin{adjustbox}{width=1.02\linewidth}
            \input{fig/framework_marked}
            \end{adjustbox}
        \end{minipage}
    \vspace{-0.3cm}
    \caption{The framework of \modelname. \textbf{(1) Dual-Branch Pre-training.} \modelname~decouples transferable semantics and domain-specific temporal dynamics. \textbf{(2) Cross-Domain Adaptation.} Divergence-aware routing selects relevant source-domain experts according to semantic and temporal discrepancies. \textbf{(3) Conditioned Fine-tuning.} Divergence-conditioned graph prompts enable efficient target-domain adaptation with the frozen pre-trained encoder.}
\label{fig:framework}
\vspace{-0.5em}
\end{figure*}

\section{Method}
\label{sec:method}
In this section, we elaborate on the proposed \modelname.

\subsection{Pre-training with Semantic-Temporal Decoupling}
\label{sec:pretrain}

Multi-domain dynamic graphs contain two entangled factors: transferable domain semantics and domain-specific temporal dynamics. Across domains, features may have different dimensions, distributions, and semantics, while temporal interactions may evolve at incompatible time scales. Directly merging such heterogeneous dynamic graphs for pre-training may therefore introduce feature-level semantic noise and temporal semantic distortion. To address this issue, we propose semantic-temporal decoupled pre-training. The semantic branch learns transferable semantic representations from timestamp-removed graphs, while the temporal branch preserves domain-specific dynamics through relative-time modeling and lightweight adapters.

\subsubsection{\textbf{Semantic Branch}}

The semantic branch aims to learn transferable domain semantics. Since source graphs may have heterogeneous feature spaces and domain-specific noise, directly applying a shared encoder to raw node features is ineffective. Therefore, we first project each source domain into a shared latent semantic space, and then constrain the learned representations to preserve informative semantics while suppressing redundant variations. For a source dynamic graph $\mathcal{G}_i^\mathcal{S}$, we remove timestamps and merge edges into a static graph with feature matrix $\mathbf{X}_i^\mathcal{S}\in\mathbb{R}^{|\mathcal{V}_i|\times d_i}$. Removing timestamps allows the semantic branch to focus on features and structures without being affected by incompatible temporal scales.

\textbf{Self-supervised Feature Alignment.}
Instead of aligning dimensions with trivial techniques like SVD~\cite{stewart1993early}, we propose a set of domain-specific aligners $\{\boldsymbol{\mathcal{A}}_i\}_{i=1}^n$ corresponding to each source domain. These aligners project heterogeneous node features into a shared latent semantic space:
\begin{equation}
    \widehat{\mathbf{X}}_i^\mathcal{S} \in\mathbb{R}^{|\mathcal{V}_i|\times d} = \boldsymbol{\mathcal{A}}_i\left(\mathbf{X}_i^\mathcal{S}\right),\quad \text{for each}~i \leqslant n,
\label{eq:aligner}
\end{equation}
where the aligner is implemented by a two-layer MLP~\cite{rosenblatt1958perceptron}.
Given the aligned node features, our goal is to learn compressed yet expressive semantic tokens $\mathbf{Z}^\mathcal{S}$ that retain transferable semantics while filtering domain-specific noise. Following the Information Bottleneck (IB) principle~\cite{tishby2000information, tishby2015deep}, we formulate a self-supervised objective that enforces both sufficiency, which preserves useful semantic information, and minimality, which filters out redundant variations:
\begin{align}
    \mathcal{L}_\text{SS-IB} &= -I\left(\mathbf{Z}_i^\mathcal{S};\mathbf{Z}_i^{\mathcal{S}+}\right) + \beta I\left(\mathbf{Z}_i^\mathcal{S}; \widehat{\mathbf{X}}_i^\mathcal{S}\right),\label{eq:ssib}\\
    \mathbf{Z}_i^\mathcal{S}&=f_\text{s}\left(\widehat{\mathbf{X}}_i^\mathcal{S}, \mathbf{A}_i^\mathcal{S}\right),\label{eq:ssib_next}
\end{align}
where $f_\text{s}$ is the GNN encoder. The first term (prediction term) maximizes the mutual information between $\mathbf{Z}_i^\mathcal{S}$ and its positive sample $\mathbf{Z}_i^{\mathcal{S}+}$ drawn from its neighbors. The second term (compression term) penalizes the mutual information between $\mathbf{Z}_i^\mathcal{S}$ and the aligned input $\widehat{\mathbf{X}}_i^\mathcal{S}$.

Since the two mutual information terms in Eq.~\eqref{eq:ssib} cannot be directly computed in closed form, we instantiate the self-supervised IB objective through two tractable approximations. Specifically, the prediction term is lower-bounded by a contrastive objective, while the compression term is upper-bounded by a variational KL regularizer.

\begin{prop}[Lower Bound of $I\left(\mathbf{Z}_i^\mathcal{S};\mathbf{Z}_i^{\mathcal{S}+}\right)$]
\label{prop:lower_bound}
    The prediction term $I\left(\mathbf{Z}_i^\mathcal{S};\mathbf{Z}_i^{\mathcal{S}+}\right)$ is intractable but can be lower-bounded via InfoNCE~\cite{oord2018representation}. For anchor node $u$ and its positive neighbor $u^+$ with negatives $\{v\}$, we have:
    \begin{align}
        I\left(\mathbf{Z}_i^\mathcal{S};\mathbf{Z}_i^{\mathcal{S}+}\right) &\geqslant -\mathcal{L}_\text{InfoNCE}\notag \\ 
        &=
        \frac{1}{N^+}
        \sum_{u=1}^{N^+}
        \log
        \frac{
        \exp\left(\left\langle \mathbf{z}_u^\mathcal{S}, \mathbf{z}_u^{\mathcal{S}+} \right\rangle/\tau\right)
        }{
        \sum_{v=0}^{N^-}
        \exp\left(\left\langle \mathbf{z}_u^\mathcal{S}, \mathbf{z}_v^{\mathcal{S}} \right\rangle/\tau\right)
        },
    \end{align}
    where $\langle \cdot,\cdot \rangle$ denotes the inner product (cosine similarity), $\tau$ is a temperature parameter, and negatives are sampled across domains to avoid trivial alignment.
\end{prop}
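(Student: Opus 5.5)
The plan is to follow the standard argument of van den Oord et al.~\cite{oord2018representation}: cast each anchor as a multi-class identification problem and bound the resulting cross-entropy risk. Fix an anchor $u$ with representation $\mathbf{z}_u^\mathcal{S}$. Draw a candidate set $\{\mathbf{z}_v^\mathcal{S}\}_{v=0}^{N^-}$ with one positive $\mathbf{z}_u^{\mathcal{S}+}$ (indexed $v=0$), sampled from the joint $p(\mathbf{z}^\mathcal{S},\mathbf{z}^{\mathcal{S}+})$ induced by the neighbor sampling. The remaining $N^-$ negatives are drawn independently from the marginal $p(\mathbf{z}^{\mathcal{S}+})$; sampling them across domains makes this independence assumption reasonable.

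\textbf{Step 1: optimal critic.} The critic is $s(\mathbf{z}_u,\mathbf{z}_v)=\exp(\langle \mathbf{z}_u,\mathbf{z}_v\rangle/\tau)$. Minimizing the expected loss over all critics amounts to Bayes classification of which candidate is the positive. The posterior probability that index $0$ is the positive equals
\[
\frac{p(\mathbf{z}_0\mid \mathbf{z}_u)/p(\mathbf{z}_0)}{\sum_{v} p(\mathbf{z}_v\mid \mathbf{z}_u)/p(\mathbf{z}_v)},
\]
so the optimal critic is proportional to the density ratio $p(\mathbf{z}^{\mathcal{S}+}\mid\mathbf{z}^\mathcal{S})/p(\mathbf{z}^{\mathcal{S}+})$. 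Since the cosine critic lies in a restricted family, its expected loss is at least the Bayes-optimal loss. It therefore suffices to bound the loss of the density-ratio critic.

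\textbf{Step 2: bound the optimal loss.} Substitute the density ratio into the loss for a single anchor:
\[
\mathbb{E}\Bigl[\log\Bigl(1+\tfrac{p(\mathbf{z}_0)}{p(\mathbf{z}_0\mid\mathbf{z}_u)}\sum_{v\ge1}\tfrac{p(\mathbf{z}_v\mid\mathbf{z}_u)}{p(\mathbf{z}_v)}\Bigr)\Bigr].
\]
Because the negatives come from the marginal, the expectation of each ratio $p(\mathbf{z}_v\mid\mathbf{z}_u)/p(\mathbf{z}_v)$ is $1$, so the inner sum concentrates at $N^-$. This gives approximately
\[
\mathbb{E}\bigl[\log\bigl(1+N^- \,p(\mathbf{z}_0)/p(\mathbf{z}_0\mid \mathbf{z}_u)\bigr)\bigr]\geqslant \log N^- - I(\mathbf{Z}^\mathcal{S};\mathbf{Z}^{\mathcal{S}+}).
\]
The inequality follows by dropping the $1$ inside the logarithm. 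Rearranging yields $I \geqslant \log N^- - \mathcal{L}_\text{InfoNCE}$. Averaging over the $N^+$ anchors gives the empirical form in the statement, with the expectation estimated by the sample mean.

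\textbf{Main obstacle.} The hardest step is making the approximation in Step 2 rigorous, since replacing the sum by its mean $N^-$ is only asymptotic. I would instead take the route of Poole et al. and derive the bound from the Donsker--Varadhan / NWJ variational bound applied to a multi-sample critic. That argument gives $I \geqslant \mathbb{E}[\log \frac{s_0}{\frac{1}{N^-+1}\sum_v s_v}]$ exactly, with no concentration step.

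There is also a constant to reconcile. The statement omits the $\log(N^-+1)$ term, so the displayed inequality holds only up to this additive constant, or after absorbing it into the normalization. I would point this out explicitly, since $-\mathcal{L}_\text{InfoNCE}\le 0$ and the constant is what separates the proved bound from the stated one.
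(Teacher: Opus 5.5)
Your route to the intermediate bound $I(\mathbf{Z}_i^{\mathcal S};\mathbf{Z}_i^{\mathcal S+})\geqslant\log(N^-+1)-\mathcal{L}_{\text{InfoNCE}}$ is essentially the standard InfoNCE argument the paper uses. The paper phrases it differently: it treats the task as identifying the index of the positive in the candidate set, writes the true posterior of that index, replaces it by the softmax $q$ via nonnegativity of KL, and averages over anchors. Your fallback to the multi-sample bound of Poole et al.\ is the more rigorous version. The paper's step asserting $\mathbb{E}[\log p(v^\star=0\mid\cdot)]=I-\log K$ as an \emph{equality} is, like your concentration step, only valid as ``$\leqslant$'' (which is luckily the direction needed).

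The genuine error is in your last paragraph: the constant works for you, not against you. Since $\log(N^-+1)\geqslant 0$, you get $I\geqslant\log(N^-+1)-\mathcal{L}_{\text{InfoNCE}}\geqslant-\mathcal{L}_{\text{InfoNCE}}$, which is exactly how the paper closes its proof. So the displayed inequality holds as stated, not ``only up to an additive constant'', and nothing has to be absorbed into a normalization. In fact, by your own remark that $-\mathcal{L}_{\text{InfoNCE}}\leqslant 0$ (the positive is the $v=0$ summand of each denominator), the stated inequality follows from $I\geqslant 0$ alone; it is the version with $\log(N^-+1)$ that carries content.

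For that stronger version, your justification of the negatives needs care. The argument requires them to be distributed as the domain-$i$ marginal of $\mathbf{Z}_i^{\mathcal S+}$, not merely independent of the anchor. With cross-domain negatives the exchangeability step in Poole et al.\ breaks, and the estimate can approach $\log(N^-+1)$ even when $I=0$. An example is independent anchor/positive pairs in domain $i$ whose embeddings are orthogonal to the other domains' embeddings, with $\tau\to0$. The paper glosses over this too, but it does not affect the constant-free inequality that is actually stated.
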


\begin{proof}
    Fix a source domain $D_i$. Define random variables for an anchor-positive pair sampled from domain $D_i$:
    \begin{equation}
        \left(\mathbf{Z}_i^{\mathcal S},\mathbf{Z}_i^{\mathcal S+}\right)\sim p_i\left(\mathbf{z},\mathbf{z}^{+}\right),
    \end{equation}
    where drawing an anchor node $u$ induces $\mathbf{z}=\mathbf{z}_u^{\mathcal S}$ and a positive $u^{+}$ induces $\mathbf{z}^{+}=\mathbf{z}_{u}^{\mathcal S+}$. The mutual information is:
    \begin{equation}
        I\left(\mathbf{Z}_i^{\mathcal S};\mathbf{Z}_i^{\mathcal S+}\right) =
        \mathbb{E}_{p_i\left(\mathbf{z},\mathbf{z}^{+}\right)}
        \left[
        \log\frac{p_i\left(\mathbf{z}\mid\mathbf{z}^{+}\right)}{p_i\left(\mathbf{z}\right)}
        \right].
    \end{equation}
    For each $u$ $\left(N^{+}\right)$ anchors in a batch, form a candidate set:
    \begin{equation}
        \mathcal{C}_u=
        \left\{
        \mathbf{z}_{0}^{\mathcal S},\cdots,\mathbf{z}_{N^-}^{\mathcal S}
        \right\},
        ~\mathbf{z}_{0}^{\mathcal S}=\mathbf{z}_{u}^{\mathcal S+},
        ~\mathbf{z}_{v}^{\mathcal S}\sim p_{\text{neg}}\left(\mathbf{z}\right),
    \end{equation}
    where $p_{\text{neg}}$ is the marginal distribution over embeddings used for negative sampling. It can be the within-domain marginal $p_i\left(\mathbf{Z}_i^{\mathcal S}\right)$ or a pooled marginal across domains. Both approximate $p\left(\mathbf{z}\right)$ and keep the derivation valid. Let $K=N^-+1$ be the candidate count.
    Define the softmax posterior over the index $v\in\left\{0,\cdots,N^-\right\}$ of the positive within $\mathcal C_u$:
    \begin{equation}
        q\left(v \mid\mathbf{z}_u^{\mathcal S},\mathcal C_u\right)=
        \frac{
        \exp\left(
        \left\langle \mathbf{z}_u^{\mathcal S},\mathbf{z}_{v}^{\mathcal S}\right\rangle/\tau
        \right)
        }{
        \sum_{k=0}^{N^-}
        \exp\left(
        \left\langle \mathbf{z}_u^{\mathcal S},\mathbf{z}_{k}^{\mathcal S}\right\rangle/\tau
        \right)
        }.
    \end{equation}
    Let $v^\star=0$ be the true index of the positive. The standard InfoNCE argument, namely contrastive predictive coding, gives:
    \begin{equation}
    \label{eq:temp1}
        \mathbb{E}\left[
        \log p\left(v^\star=0\mid \mathbf{z}_u^{\mathcal S},\mathcal C_u\right)
        \right]
        =
        I\left(\mathbf{Z}_i^{\mathcal S};\mathbf{Z}_i^{\mathcal S+}\right)-\log K,
    \end{equation}
    where the expectation is over the joint sampling of nodes for anchors, positives, and negatives. By the non-negativity of the KL divergence:
    \begin{align}
    \label{eq:temp2}
        &
        \mathbb{E}\left[
        \log q\left(\boldsymbol{0}\mid \mathbf{z}_u^{\mathcal S},\mathcal C_u\right)
        \right]\notag\\
        =
        ~&
        \mathbb{E}\left[
        \log p\left(\boldsymbol{0}\mid \mathbf{z}_u^{\mathcal S},\mathcal C_u\right)
        \right]
        -
        \operatorname{KL}\left(
        p\left(\cdot\mid \mathbf{z}_u^{\mathcal S},\mathcal C_u\right)
        ~\middle\|~
        q\left(\cdot\mid \mathbf{z}_u^{\mathcal S},\mathcal C_u\right)
        \right)\notag\\
        \leqslant
        ~&
        \mathbb{E}\left[
        \log p\left(\boldsymbol{0}\mid \mathbf{z}_u^{\mathcal S},\mathcal C_u\right)
        \right].
    \end{align}
    Averaging over anchors $u=1,\cdots,N^{+}$ yields:
    \begin{align}
        \!\!\!I\left(\mathbf{Z}_i^{\mathcal S};\mathbf{Z}_i^{\mathcal S+}\right)
        \geqslant
        &~\log K
        + \notag \\
        & ~\frac{1}{N^{+}}\!\!
        \sum_{u=1}^{N^{+}}
        \mathbb{E}\!
        \left[
        \log
        \frac{
        \exp\left(
        \left\langle \mathbf{z}_u^{\mathcal S},\mathbf{z}_{u}^{\mathcal S+}\right\rangle/\tau
        \right)
        }{
        \sum_{v=0}^{N^-}
        \exp\left(
        \left\langle \mathbf{z}_u^{\mathcal S},\mathbf{z}_{v}^{\mathcal S}\right\rangle/\tau
        \right)
        }
        \!\right]\!\!.\!\!\!
    \end{align}
    Replacing the expectation with the empirical batch average:
    \begin{equation}
        \mathcal{L}_{\text{InfoNCE}}
        =
        -
        \frac{1}{N^{+}}
        \sum_{u=1}^{N^{+}}
        \log
        \frac{
        \exp\left(
        \left\langle \mathbf{z}_u^{\mathcal S},\mathbf{z}_{u}^{\mathcal S+}\right\rangle/\tau
        \right)
        }{
        \sum_{v=0}^{N^-}
        \exp\left(
        \left\langle \mathbf{z}_u^{\mathcal S},\mathbf{z}_{u,v}^{\mathcal S}\right\rangle/\tau
        \right)
        }.
    \end{equation}
    It follows that:
    \begin{equation}
        I\left(\mathbf{Z}_i^{\mathcal S};\mathbf{Z}_i^{\mathcal S+}\right)
        \geqslant
        \log\left(N^-+1\right)-\mathcal{L}_{\text{InfoNCE}}
        \geqslant
        -\mathcal{L}_{\text{InfoNCE}},
    \end{equation}
    where the additive $\log\left(N^-+1\right)$ is a batch-size constant that does not affect optimization. We conclude the proof.
\end{proof}

Proposition~\ref{prop:lower_bound} makes the sufficiency constraint tractable by replacing the mutual information maximization with an InfoNCE-based contrastive objective. This encourages semantic tokens to retain neighborhood-consistent predictive information, while cross-domain negative samples prevent the representations from collapsing into trivial domain-specific alignments.

\begin{prop}[Upper Bound of $I\left(\mathbf{Z}_i^\mathcal{S}; \widehat{\mathbf{X}}_i^\mathcal{S}\right)$]
\label{prop:upper_bound}
     The compression term $I\left(\mathbf{Z}_i^\mathcal{S}; \widehat{\mathbf{X}}_i^\mathcal{S}\right)$ is intractable but can be upper-bounded by KL divergence~\cite{cover1999elements} between the posterior $q_\phi\left(\mathbf{z}^\mathcal{S}\mid \widehat{\mathbf{x}}^\mathcal{S}\right)$ and prior $p\left(\mathbf{z}^\mathcal{S}\right)$:
    \begin{equation}
        I\left(\mathbf{Z}_i^\mathcal{S}; \widehat{\mathbf{X}}_i^\mathcal{S}\right) \leqslant
        \frac{1}{N}\sum_{u=1}^N
        \operatorname{KL}\left(
        q_\phi\left(\mathbf{z}_u^\mathcal{S}\mid \widehat{\mathbf{x}}_u^\mathcal{S}\right)
        ~\middle\|~
        p\left(\mathbf{z}_u^\mathcal{S}\right)
        \right).
    \end{equation}
\end{prop}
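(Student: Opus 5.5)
The plan is to use the standard variational upper bound on mutual information, as in VIB. We treat the encoder as a stochastic map that induces, for each node $u$, a conditional $q_\phi(\mathbf{z}_u^\mathcal{S}\mid\widehat{\mathbf{x}}_u^\mathcal{S})$; for instance, $f_\text{s}$ outputs a Gaussian mean and variance. The joint is then $p(\widehat{\mathbf{x}})\,q_\phi(\mathbf{z}\mid\widehat{\mathbf{x}})$, and the aggregate marginal is $q_\phi(\mathbf{z})=\int p(\widehat{\mathbf{x}})\,q_\phi(\mathbf{z}\mid\widehat{\mathbf{x}})\,d\widehat{\mathbf{x}}$. Writing the mutual information in its definitional form,
\begin{equation*}
I\left(\mathbf{Z}_i^\mathcal{S};\widehat{\mathbf{X}}_i^\mathcal{S}\right)=\mathbb{E}_{p(\widehat{\mathbf{x}})}\mathbb{E}_{q_\phi(\mathbf{z}\mid\widehat{\mathbf{x}})}\left[\log\frac{q_\phi(\mathbf{z}\mid\widehat{\mathbf{x}})}{q_\phi(\mathbf{z})}\right].
\end{equation*}
We then insert an arbitrary tractable prior $p(\mathbf{z})$, for example $\mathcal{N}(\mathbf{0},\mathbf{I})$, by multiplying and dividing inside the logarithm.

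This splits the expression into two pieces: $\mathbb{E}_{p(\widehat{\mathbf{x}})}\operatorname{KL}\left(q_\phi(\mathbf{z}\mid\widehat{\mathbf{x}})\,\|\,p(\mathbf{z})\right)$ minus $\mathbb{E}_{q_\phi(\mathbf{z})}\left[\log\frac{q_\phi(\mathbf{z})}{p(\mathbf{z})}\right]$. The subtracted term is exactly $\operatorname{KL}(q_\phi(\mathbf{z})\,\|\,p(\mathbf{z}))$. By Gibbs' inequality it is nonnegative, so dropping it gives
\begin{equation*}
I\left(\mathbf{Z}_i^\mathcal{S};\widehat{\mathbf{X}}_i^\mathcal{S}\right)\leqslant\mathbb{E}_{p(\widehat{\mathbf{x}})}\operatorname{KL}\left(q_\phi(\mathbf{z}\mid\widehat{\mathbf{x}})\,\|\,p(\mathbf{z})\right).
\end{equation*}
The final step takes $p(\widehat{\mathbf{x}})$ to be the empirical distribution over the $N$ nodes of domain $D_i$. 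The expectation then becomes $\frac{1}{N}\sum_{u=1}^N\operatorname{KL}\left(q_\phi(\mathbf{z}_u^\mathcal{S}\mid\widehat{\mathbf{x}}_u^\mathcal{S})\,\|\,p(\mathbf{z}_u^\mathcal{S})\right)$, which is the claimed bound. For a Gaussian posterior and a standard normal prior, each KL term has a closed form.

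The main obstacle is the modeling step, not the inequality itself. Because $f_\text{s}$ is a GNN, $\mathbf{z}_u$ depends on neighbors' features through $\mathbf{A}_i^\mathcal{S}$ and not only on $\widehat{\mathbf{x}}_u$, so nodes are not i.i.d. and the per-node factorization of the conditional is an assumption. I would handle this explicitly. The cleanest option is to condition on the graph, treating $\widehat{\mathbf{x}}_u$ as shorthand for node $u$'s input neighborhood (its computation subgraph), so that $q_\phi(\mathbf{z}_u\mid\widehat{\mathbf{x}}_u)$ is well defined. The random variable pair $(\mathbf{Z},\widehat{\mathbf{X}})$ is then defined by sampling a node uniformly, exactly as the anchor sampling in the proof of Proposition~\ref{prop:lower_bound}. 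With that convention the derivation above goes through verbatim, and the bound holds for every choice of prior, so no further approximation is needed.
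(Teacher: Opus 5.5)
Your argument is correct. It is the standard variational-information-bottleneck bound, but it is not the route the paper takes.

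\textbf{The paper's route.} The paper writes $I(\mathbf{Z}_i^{\mathcal S};\widehat{\mathbf{X}}_i^{\mathcal S})=\mathbb{E}_{p(\widehat{\mathbf{x}})}[\operatorname{KL}(p(\mathbf{z}\mid\widehat{\mathbf{x}})\,\|\,p(\mathbf{z}))]$ and treats $p(\mathbf{z}\mid\widehat{\mathbf{x}})$ as an intractable ``true posterior'' that $q_\phi$ only approximates. It then asserts the pointwise inequality $\operatorname{KL}(p(\mathbf{z}\mid\widehat{\mathbf{x}})\,\|\,p(\mathbf{z}))\leqslant\operatorname{KL}(q_\phi(\mathbf{z}\mid\widehat{\mathbf{x}})\,\|\,p(\mathbf{z}))$ ``by non-negativity of KL.'' Throughout, the single symbol $p(\mathbf{z})$ stands for both the true marginal and the prior.

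\textbf{Your route.} You identify $q_\phi$ with the encoder itself, so the conditional is exact and the only intractable object is the aggregate marginal $q_\phi(\mathbf{z})$. Swapping it for an arbitrary prior gives an identity whose slack is exactly $\operatorname{KL}(q_\phi(\mathbf{z})\,\|\,p(\mathbf{z}))\geqslant 0$. That is precisely what non-negativity of KL delivers.

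\textbf{Why the difference matters.} The paper's pointwise step does not follow from non-negativity of KL, and it fails in general. Take $q_\phi(\cdot\mid\widehat{\mathbf{x}})=p(\cdot)$. Then its right-hand side is zero, while its left-hand side has average at least $I(\mathbf{Z}_i^{\mathcal S};\widehat{\mathbf{X}}_i^{\mathcal S})>0$ whenever the two variables are dependent. The paper's framing buys only the intuition of $q_\phi$ as an approximate posterior. Yours buys a bound that actually holds for every prior, with the gap made explicit.

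Your extra convention also fixes a point the paper skips. You define $(\mathbf{Z},\widehat{\mathbf{X}})$ by uniform node sampling and read $\widehat{\mathbf{x}}_u$ as node $u$'s computation subgraph. This turns the paper's unexplained ``replace the expectation with an empirical average'' into an exact equality. It also makes the per-node conditional $q_\phi(\mathbf{z}_u^{\mathcal S}\mid\widehat{\mathbf{x}}_u^{\mathcal S})$ well defined despite message passing over $\mathbf{A}_i^{\mathcal S}$.
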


\begin{proof}
    The compression term penalizes how much information the latent representation $\mathbf{Z}_i^{\mathcal S}$ preserves about the input $\widehat{\mathbf{X}}_i^{\mathcal S}$. Formally, the mutual information can be written as:
    \begin{equation}
        I\left(\mathbf{Z}_i^{\mathcal S}; \widehat{\mathbf{X}}_i^{\mathcal S}\right)
        =
        \mathbb{E}_{p\left(\widehat{\mathbf{x}},\mathbf{z}\right)}
        \left[
        \log
        \frac{
        p\left(\mathbf{z}\mid \widehat{\mathbf{x}}\right)
        }{
        p\left(\mathbf{z}\right)
        }
        \right].
    \end{equation}
    This is equivalent to:
    \begin{equation}
        I\left(\mathbf{Z}_i^{\mathcal S}; \widehat{\mathbf{X}}_i^{\mathcal S}\right)
        =
        \mathbb{E}_{p\left(\widehat{\mathbf{x}}\right)}
        \left[
        \operatorname{KL}\left(
        p\left(\mathbf{z}\mid \widehat{\mathbf{x}}\right)
        ~\middle\|~
        p\left(\mathbf{z}\right)
        \right)
        \right],
    \end{equation}
    which indicates that compression measures the average discrepancy between the posterior and the prior.
    The true posterior $p\left(\mathbf{z}\mid \widehat{\mathbf{x}}\right)$ is generally intractable. We approximate it with a variational distribution $q_\phi\left(\mathbf{z}\mid \widehat{\mathbf{x}}\right)$. By the non-negativity of the KL divergence:
    \begin{equation}
        \operatorname{KL}\left(
        p\left(\mathbf{z}\mid \widehat{\mathbf{x}}\right)
        ~\middle\|~
        p\left(\mathbf{z}\right)
        \right)
        \leqslant
        \operatorname{KL}\left(
        q_\phi\left(\mathbf{z}\mid \widehat{\mathbf{x}}\right)
        ~\middle\|~
        p\left(\mathbf{z}\right)
        \right).
    \end{equation}
    Taking expectation over aligned inputs and replacing it with an empirical average over nodes $u=1,\cdots, N$, we obtain:
    \begin{equation}
        I\left(\mathbf{Z}_i^{\mathcal S}; \widehat{\mathbf{X}}_i^{\mathcal S}\right)
        \leqslant
        \frac{1}{N}
        \sum_{u=1}^{N}
        \operatorname{KL}\left(
        q_\phi\left(\mathbf{z}_u^{\mathcal S}\mid \widehat{\mathbf{x}}_u^{\mathcal S}\right)
        ~\middle\|~
        p\left(\mathbf{z}_u^{\mathcal S}\right)
        \right).
    \end{equation}
    We conclude the proof.
\end{proof}

Proposition~\ref{prop:upper_bound} constrains semantic representations with a variational KL regularizer, encouraging them to remain close to a prior distribution and thereby reducing redundant domain-specific variations for more stable cross-domain pre-training.

\textbf{Pre-training Objectives.}
After obtaining semantic tokens, we optimize the semantic branch using a self-supervised link-based objective on the timestamp-removed graph. This objective encourages connected nodes to have consistent semantic representations, while separating them from unconnected nodes across the merged training graphs. Inspired by prior studies~\cite{liu2023graphprompt}, each anchor node $u$ is paired with a positive neighbor $v^+$ sampled from its observed adjacency and a negative neighbor $v^-$ sampled from unconnected nodes. Concretely, a similarity discriminator $g=\operatorname{MLP}\left(\langle\cdot,\cdot\rangle\right)$ maps a pair of embeddings to a scalar score, where $\langle\cdot,\cdot\rangle$ denotes the inner product. For each non-redundant training quadruple $(u, v^+, v^-, \mathbf{y}_u)$, where $\mathbf{y}_u\in\{0,1\}$ indicates link existence, the objective is:
\begin{align}
    \!\!\!&~\mathcal{L}_\text{pre}\left(\boldsymbol{\theta}_\text{s},\mathbf{Z}^{\mathcal S}\right)\notag\\
    =
    &-\!\!\!\!\!\!\!
    \sum_{(u, v^+, v^-)}
    \!\!\!\!\!\!\log
    \frac{
    \exp\left(g\left(\mathbf{z}_u^{\mathcal S}, \mathbf{z}_{v^+}^{\mathcal S}\right) / \tau\right)
    }{
    \exp\left(g\!\left(\mathbf{z}_u^{\mathcal S}, \mathbf{z}_{v^+}^{\mathcal S}\right) \!/ \tau\right)
    \!+\!
    \exp\left(g\!\left(\mathbf{z}_u^{\mathcal S}, \mathbf{z}_{v^-}^{\mathcal S}\right)\!/ \tau\right)
    }.\!\!\!
\label{eq:pre_sem}
\end{align}
Finally, we combine the link-based pre-training loss with the self-supervised IB regularization. The overall objective for the semantic branch is:
\begin{equation}
    \mathcal{L}_\text{pre-sem}\left(\boldsymbol{\theta}_\text{s},\mathbf{Z}^{\mathcal S}\right)
    =
    \mathcal{L}_\text{pre}\left(\boldsymbol{\theta}_\text{s},\mathbf{Z}^{\mathcal S}\right)
    +
    \lambda_1 \cdot \mathcal{L}_\text{SS-IB}.
\label{eq:pre_sem_overall}
\end{equation}
Here, $\lambda_1$ controls the strength of the information bottleneck regularization, balancing semantic discrimination and redundancy suppression during pre-training.

\subsubsection{\textbf{Temporal Branch}}
While the semantic branch learns transferable feature semantics in a shared latent space, temporal dynamics require a different treatment. The timestamps of different domains usually do not share the same physical semantics: a short interval may indicate an important behavioral change in transaction networks, but may be negligible in citation or collaboration networks. Therefore, directly sharing a single time encoder across all domains may force incompatible temporal scales into the same representation space and distort domain-specific evolution patterns.

To preserve temporal specificity while still learning reusable temporal aggregation knowledge, the temporal branch follows a shared-specific design. Specifically, domain-specific timers encode relative time within each source domain, a shared temporal backbone captures general temporal dependency patterns, and lightweight adapters absorb domain-specific residual dynamics.

\textbf{Relative-time Encoding.}
For each edge $(u,v,t)\in\mathcal E_i$, a domain-specific timer $\boldsymbol{\mathcal{T}}_i$ maps $t$ to a time embedding:
\begin{equation}
    \mathbf{r}^{\mathcal S}_i\left(t\right)
    =
    \boldsymbol{\mathcal{T}}_i\left(t\right),
    \quad
    \boldsymbol{\mathcal{T}}_i
    =
    \operatorname{MLP}_i\left(\cdot\right)
    \circ
    \operatorname{PE}\left(\cdot\right),
\label{eq:rte}
\end{equation}
where $\operatorname{PE}\left(\cdot\right)$ is a sinusoidal positional encoding similar to that used in Transformers~\cite{vaswani2017attention}, which maps the relative timestamp into a vector as:
\begin{align}
    \!\!\!\!\!\!&\operatorname{PE}\left(t\right) \notag\\
    =
    &\left[
    \sin\left(\omega_1 t\right),
    \cos\left(\omega_1 t\right),
    \cdots\!,
    \sin\left(\omega_{{d}_t/2} t\right),
    \cos\left(\omega_{{d}_t/2} t\right)
    \right]\!,\!
\end{align}
where $\omega_i$ is the frequency, and $d_t$ is the time dimension. Since $\boldsymbol{\mathcal{T}}_i$ is domain-specific, each source domain can preserve its own temporal resolution and evolution pattern without being forced to share a globally aligned time scale.

\textbf{Shared Backbone and Domain Adapters.}
After relative-time encoding, the remaining challenge is how to share temporal knowledge without erasing domain-specific dynamics. A fully shared temporal model may overlook domain-specific rhythms, whereas fully independent temporal models would prevent cross-domain knowledge transfer. To balance universality and specificity, we adopt a shared TGAT~\cite{xu2020inductive} backbone $f_\text{t}$ to model general temporal aggregation patterns, and complement it with lightweight residual adapters $\{\boldsymbol{\Gamma}_i\}_{i=1}^n$, implemented as bottleneck MLPs, to capture domain-specific temporal residuals.
Denote neighbors as $\mathcal N_i\left(v,t\right)=\left\{(u,t^\prime)\in\mathcal E_i: t^\prime<t\right\}$. Messages $\mathbf{m}$ and time-aware attention $a$ are:
\begin{align}
    &\mathbf{m}_{u\to v}^{\mathcal S}\left(t^\prime\right)
    =
    \phi\left(
    \mathbf{z}^{\mathcal S}_u\left(t^\prime\right),
    \mathbf{r}_i^{\mathcal S}\left(t-t^\prime\right)
    \right),\\
    &a_{u\to v}\left(t\right)
    =
    \underset{(u,t^\prime)\in\mathcal N_i\left(v,t\right)}{\operatorname{Softmax}}
    \psi\left(
    \mathbf{z}^{\mathcal S}_v\left(<t\right),
    \mathbf{m}^{\mathcal S}_{u\to v}\left(t^\prime\right)
    \right),
\end{align}
where $\psi\left(\cdot\right)$ is a ranking function, and we implement it by the dot product. The backbone state is:
\begin{equation}
    \mathbf{z}^{\mathcal S}_v\left(t\right)
    =
    \operatorname{AGG}\Big(
    \sum_{(u,t^\prime)\in\mathcal N_i\left(v,t\right)}
    a_{u\to v}\left(t\right)
    \mathbf{m}^{\mathcal S}_{u\to v}\left(t^\prime\right)
    \Big).
\end{equation}
Formally, given $\mathbf{z}_v^{\mathcal S}\left(t\right)$, the domain adapter refines it as:
\begin{equation}
    \widetilde{\mathbf{z}}^{\mathcal S}_v\left(t\right)
    =
    \mathbf{z}^{\mathcal S}_v\left(t\right)
    +
    \boldsymbol{\Gamma}_i\left(
    \mathbf{z}^{\mathcal S}_v\left(t\right)
    \right).
\label{eq:update}
\end{equation}
In this way, the shared backbone learns reusable temporal aggregation knowledge across domains, while the adapters compensate for domain-specific temporal dynamics that cannot be captured by a universal temporal model alone.

\textbf{Alternating Training and Freezing Strategy.}
Training the temporal branch across multiple domains requires careful separation between domain-general temporal aggregation and domain-specific temporal residuals. If adapters are updated too early, they may absorb patterns that should be learned by the shared backbone, weakening cross-domain transfer. Conversely, if all parameters are trained jointly without constraints, domain-specific timestamp semantics may interfere with the learning of general temporal dependencies. To avoid these issues, we adopt an alternating training and stage-wise freezing strategy.

Denote $\boldsymbol{\Theta}_\text{t}$ as the parameters of $f_\text{t}$, $\boldsymbol{\Phi}_i$ denote the parameters of timer for domain $D_i^{\mathcal S}$, and $\boldsymbol{\Psi}_i$ denote the parameters of the adapter $\boldsymbol{\Gamma}_i$. For mini-batch $\mathcal B_i \subset \mathcal G_i^{\mathcal S}$, the temporal pre-training loss is computed in the same link-based form as Eq.~\eqref{eq:pre_sem}:
\begin{align}
    &
    \mathcal{L}_\text{pre-tem}^{(i)}
    \left(
    \boldsymbol{\Theta}_\text{t},
    \boldsymbol{\Phi}_i,
    \boldsymbol{\Psi}_i;
    \mathcal{B}_i
    \right)
    \notag\\
    =
    &-\!\!\!\!\!\!\!
    \sum_{\left(u, v^+, v^-,t\right)\in\mathcal{B}_i}
    \!\!\!\!\!\!\!\log
    \frac{
    \exp\left(
    g\left(
    \widetilde{\mathbf{z}}_u^{\mathcal S},
    \widetilde{\mathbf{z}}_{v^+}^{\mathcal S}
    \right) / \tau
    \right)
    }{
    \exp\!\left(
    g\!\left(
    \widetilde{\mathbf{z}}_u^{\mathcal S},
    \widetilde{\mathbf{z}}_{v^+}^{\mathcal S}
    \right)\! / \tau
    \right)
    \!+\!
    \exp\!\left(
    g\!\left(
    \widetilde{\mathbf{z}}_u^{\mathcal S},
    \widetilde{\mathbf{z}}_{v^-}^{\mathcal S}
    \right)\! / \tau
    \right)
    }.
\label{eq:pre_loss_temp}
\end{align}

During Phase I (backbone-timer training), we update the shared temporal backbone and the domain-specific timers, while freezing the adapters. This phase encourages $f_\text{t}$ to learn reusable temporal aggregation patterns and allows each timer to preserve its domain-specific time scale:
\begin{equation}
    \!\!\!\left(
    \boldsymbol{\Theta}_\text{t},
    \boldsymbol{\Phi}_i
    \right)
    \!\gets\!
    \left(
    \boldsymbol{\Theta}_\text{t},
    \boldsymbol{\Phi}_i
    \right)
    \!-\!
    \eta
    \nabla_{
    \left(
    \boldsymbol{\Theta}_\text{t},
    \boldsymbol{\Phi}_i
    \right)}
    \mathcal{L}_\text{pre-tem}^{(i)}
    \!\left(
    \boldsymbol{\Theta}_\text{t},
    \boldsymbol{\Phi}_i,
    \boldsymbol{\Psi}_i;
    \mathcal{B}_i
    \right),
\end{equation}
while $\nabla_{\boldsymbol{\Psi}_i}=\boldsymbol{0}$, and $\eta$ denotes the learning rate. Mini-batches $\{\mathcal B_i\}_{i=1}^n$ are sampled in turn, with each mini-batch containing the same number of samples to avoid domain-size bias.

After Phase I converges, Phase II freezes the shared backbone and timers, and only updates the adapters:
\begin{equation}
    \boldsymbol{\Psi}_i
    \gets
    \boldsymbol{\Psi}_i
    -
    \eta
    \nabla_{\boldsymbol{\Psi}_i}
    \mathcal{L}_\text{pre-tem}^{(i)}
    \left(
    \boldsymbol{\Psi}_i;
    \mathcal{B}_i
    \right),
\end{equation}
while keeping $\nabla_{\boldsymbol{\Theta}_\text{t}}=\boldsymbol{0}$ and $\nabla_{\boldsymbol{\Phi}_i}=\boldsymbol{0}$. Each adapter is updated only when its corresponding domain is sampled, and this lets adapters capture domain residual dynamics without overwriting the shared temporal knowledge learned by the backbone.

\subsection{Adaptation with Cross-Domain Negative Transfer Mitigation}
\label{sec:moe}

After semantic-temporal decoupled pre-training, \modelname~obtains two types of source-domain knowledge: semantic tokens from $f_\text{s}$ and temporal tokens from $f_\text{t}$, together with domain-specific timers $\{\boldsymbol{\mathcal{T}}_i\}_{i=1}^n$ and adapters $\{\boldsymbol{\Gamma}_i\}_{i=1}^n$, which provide reusable priors for downstream adaptation. However, not every source domain is equally useful for a given target domain. Directly mixing all source experts may introduce irrelevant semantic or temporal knowledge, while selecting only a single source expert may overlook complementary cross-domain patterns. Therefore, adaptation should selectively reuse source knowledge according to its relevance to the target domain.

To mitigate such negative transfer, we formulate cross-domain adaptation as a divergence-aware Mixture-of-Experts mechanism. The core idea is to measure how close each source domain is to the target domain from both semantic and temporal perspectives, and then route the target graph to the most relevant source-domain experts. In this way, \modelname~can balance transferable cross-domain knowledge with target-specific adaptation.

\textbf{Expert Pool and Prototypes.}
For each source domain $D_i^{\mathcal S}$, we summarize its semantic and temporal knowledge with two prototypes. The semantic prototype $\mathbf{s}_i^{\mathcal S}$ represents the average semantic token of the domain, while the temporal prototype $\mathbf{t}_i^{\mathcal S}$ summarizes its dynamic evolution pattern:
\begin{equation}
    \mathbf{s}_i^{\mathcal S}
    =
    \frac{1}{|\mathcal V_i^{\mathcal S}|}
    \sum_{v\in\mathcal V_i^{\mathcal S}}
    \mathbf{z}_{v}^{\mathcal S},
    \quad
    \mathbf{t}_i^{\mathcal S}
    =
    \frac{1}{|\mathcal E_i^{\mathcal S}|}
    \sum_{(v,t)\in\mathcal E_i^{\mathcal S}}
    \widetilde{\mathbf{z}}_{v}^{\mathcal S}\left(t\right).
\label{eq:prototype}
\end{equation}
Given the target dynamic graph $\mathcal G^{\mathcal T}$, we construct its target descriptors in the same semantic-temporal manner. For semantic description, we first build graph $\overline{\mathcal G}^{\mathcal T}=\big\{\overline{\mathcal V}^{\mathcal T}, \overline{\mathcal E}^{\mathcal T}\big\}$ by discarding timestamps. Let $\widehat{\mathbf X}^{\mathcal T}$ and $\widehat{\mathbf A}^{\mathcal T}$ denote its aligned feature matrix and adjacency matrix, where $\widehat{\mathbf X}^{\mathcal T}$ is dimension-aligned by Eq.~\eqref{eq:aligner}. The frozen semantic encoder $f_\text{s}^\star$ produces:
\begin{equation}
    \mathbf{s}^{\mathcal T}
    =
    \frac{1}{|\mathcal V^{\mathcal T}|}
    \sum\limits_{v\in \mathcal V^{\mathcal T}}
    \mathbf{z}_v^{\mathcal T},
    \quad
    \mathbf{Z}^{\mathcal T}
    =
    f_{\text{s}}^\star\left(
    \widehat{\mathbf X}^{\mathcal T},
    \widehat{\mathbf A}^{\mathcal T}
    \right).
\label{eq:s_descrip}
\end{equation}
For temporal description, we keep timestamps and aggregate dynamic embeddings with the frozen temporal encoder $f_\text{t}^\star$:
\begin{equation}
    \mathbf{t}^{\mathcal T}
    \!=\!
    \frac{1}{|\mathcal E^{\mathcal T}|}
    \sum_{(v,t)\in \mathcal E^{\mathcal T}}\!\!\!\!
    \widetilde{\mathbf{z}}^{\mathcal T}_v\!\left(t\right),
    ~
    \widetilde{\mathbf{z}}^{\mathcal T}_v\!\!\left(t\right)
    =
    f_{\text{t}}^\star\!\left(
    \widehat{\mathbf X}^{\mathcal T},
    \mathcal E^{\mathcal T},
    \mathbf{r}^{\mathcal T}\left(t\right)
    \right)\!,
\label{eq:t_descrip}
\end{equation}
where $\mathbf{r}^{\mathcal T}\left(t\right)$ is the relative time encoding defined as in Eq.~\eqref{eq:rte}. The pair $\left(\mathbf{s}^{\mathcal T}, \mathbf{t}^{\mathcal T}\right)$ serves as the target-domain descriptor for subsequent routing.

\textbf{Divergence-aware Routing.}
The routing mechanism should assign higher weights to the source domains that are semantically and temporally closer to the target domain. To this end, we compute two types of divergences between each source prototype $\left(\mathbf{s}_i^{\mathcal S}, \mathbf{t}_i^{\mathcal S}\right)$ and the target descriptor $\left(\mathbf{s}^{\mathcal T}, \mathbf{t}^{\mathcal T}\right)$.

\subsubsection{\textbf{Semantic-wise Divergence}}

For semantics, we parameterize source and target semantic representations as diagonal Gaussian distributions:
\begin{equation}
    \mathcal{N}\left(
    \boldsymbol{\mu}_{i}^{\mathcal S}\!=\!\mathbf{s}_i^{\mathcal S},
    \operatorname{diag}\left(\boldsymbol{\sigma}_{i}^{2\mathcal S}\right)
    \right),
    ~
    \mathcal N\left(
    \boldsymbol{\mu}^{\mathcal T}\!=\!\mathbf{s}^{\mathcal T},
    \operatorname{diag}\left(\boldsymbol{\sigma}^{2\mathcal T}\right)
    \right),
\end{equation}
with variances computed via sample scatter around prototypes:
\begin{align}
    &\boldsymbol{\sigma}_{i}^{2\mathcal S}
    =
    \frac{1}{|\mathcal V_i|}
    \sum_{v\in\mathcal V_i}
    \left(
    \mathbf{z}^{\mathcal S}_{v}
    -
    \mathbf{s}_i^{\mathcal S}
    \right)^{\odot 2},\\
    ~
    &\boldsymbol{\sigma}^{\mathcal T2}
    =
    \frac{1}{\left|\overline{\mathcal V}^{\mathcal T}\right|}
    \sum_{v\in\overline{\mathcal V}^{\mathcal T}}
    \left(
    \widetilde{\mathbf{z}}_{v}^{\mathcal T}
    -
    \mathbf{s}^{\mathcal T}
    \right)^{\odot 2},
\end{align}
where ``$\odot 2$'' denotes element-wise square. The semantic divergence is then defined as the closed-form KL divergence between diagonal Gaussian distributions:
\begin{align}
    &\operatorname{Div}_{\text{s}}\left(D^{\mathcal T},D_i^{\mathcal S}\right)
    \notag \\
    =
    &\frac{1}{2}
    \sum_{k=1}^{d}
    \Bigg(
    \log
    \frac{\boldsymbol{\sigma}_{i,k}^{2\mathcal S}}{\boldsymbol{\sigma}_{k}^{2\mathcal{T}}}
    +
    \frac{
    \boldsymbol{\sigma}_{k}^{2\mathcal{T}}
    +
    \left(
    \boldsymbol{\mu}_{k}^\mathcal T
    -
    \boldsymbol{\mu}_{i,k}^\mathcal{S}
    \right)^{2}
    }{
    \boldsymbol{\sigma}_{i,k}^{2\mathcal{S}}
    }
    -
    1
    \Bigg).
\label{eq:s_div}
\end{align}

\subsubsection{\textbf{Temporal-wise Divergence}}
For temporal dynamics, we measure the discrepancy between the target and each source temporal prototype with squared Euclidean distance:
\begin{equation}
    \operatorname{Div}_{\text{t}}\left(D^{\mathcal T},D_i^{\mathcal S}\right)
    =
    \left\|\mathbf{t}^{\mathcal T} - \mathbf{t}_i^{\mathcal S}\right\|_2^2.
\label{eq:t_div}
\end{equation}

The combined divergence is a weighted sum:
\begin{equation}
    \operatorname{Div}_i
    =
    \lambda_{\text{s}}\cdot
    \operatorname{Div}_{\text{s}}\left(D^{\mathcal T},D_i^{\mathcal S}\right)
    +
    \lambda_{\text{t}}\cdot
    \operatorname{Div}_{\text{t}}\left(D^{\mathcal T},D_i^{\mathcal S}\right),
\label{eq:div_comb}
\end{equation}
where $\lambda_{\text{s}}$ and $\lambda_{\text{t}}$ are trade-off parameters.

\textbf{Routing Optimization.}
Based on the combined divergence, the expert router produces mixture weights $\boldsymbol{\alpha}\in\mathbb{R}^n$ by applying a softmax over negative divergences. A source domain with smaller divergence receives a larger routing weight, indicating higher relevance to the target domain:
\begin{align}
    &\boldsymbol{\alpha}_i
    =
    \operatorname{Softmax}_i^n\left(-\gamma\operatorname{Div}_i\right),
    \label{eq:routing} \\
    &\bar{\mathbf{s}}^\mathcal{S}
    =
    \sum\nolimits_{i=1}^n
    \boldsymbol{\alpha}_i \mathbf{s}_i^\mathcal{S},
    \quad
    \bar{\mathbf{t}}^\mathcal{S}
    =
    \sum\nolimits_{i=1}^n
    \boldsymbol{\alpha}_i \mathbf{t}_i^\mathcal{S},
\end{align}
where $\gamma$ is a temperature parameter. The prototypes $\bar{\mathbf{s}}^\mathcal{S}$ and $\bar{\mathbf{t}}^\mathcal{S}$ serve as cross-domain priors for target-domain fine-tuning.

To further stabilize routing, we regularize the weights as:
\begin{equation}
    \mathcal R_{\text{routing}}\left(\boldsymbol{\alpha}\right)
    =
    \lambda_2\cdot
    \sum\nolimits_{i=1}^n
    \boldsymbol{\alpha}_i \,\mathrm{Div}_i
    -
    \lambda_3\cdot
    H\left(\boldsymbol{\alpha}\right),
\label{eq:routing_r}
\end{equation}
where $H\left(\cdot\right)$ denotes the Shannon entropy~\cite{shannon1948mathematical}. The first term suppresses high-divergence domains to reduce negative transfer, while the entropy term prevents the router from collapsing to a single expert too early. This regularized routing design enables \modelname~to selectively reuse relevant source-domain knowledge while maintaining sufficient expert diversity.

\subsection{Fine-tuning with Divergence-Conditioned Prompting}
\label{sec:finetune}

After divergence-aware routing, \modelname~obtains source-domain priors that are relevant to the target domain. However, directly updating the pre-trained encoders for each downstream task would be inefficient and may damage the transferable knowledge learned during pre-training. Therefore, we adapt the frozen encoders through lightweight prompts. The key idea is to condition prompt generation on both target-domain descriptors and source-target divergences, so that the prompts can adjust the frozen semantic and temporal encoders according to the target domain while preserving pre-trained knowledge.

\textbf{Conditioned Prompt Generator.}
Semantic and temporal discrepancies provide informative signals for target-domain adaptation. A larger divergence indicates that the target domain deviates more from the source-domain priors, and therefore requires stronger prompt-based adjustment. To this end, we introduce two parallel prompt generators $\left(\boldsymbol{\mathcal{P}}_\text{s}, \boldsymbol{\mathcal{P}}_\text{t}\right)$ to produce semantic and temporal prompts $\left(\mathbf{p}_\text{s}^\mathcal{T},\mathbf{p}_\text{t}^\mathcal{T}\right)$:
\begin{align}
    &\mathbf{p}_\text{s}^\mathcal{T}
    =
    \boldsymbol{\mathcal{P}}_\text{s}
    \left(
    \mathbf{s}^{\mathcal T},
    \frac{1}{n}
    \sum_{i=1}^n
    \operatorname{Div}_{\text{s}}
    \left(D^{\mathcal T},D_i^{\mathcal S}\right)
    \right),\label{eq:prompt_1} \\
    &\mathbf{p}_\text{t}^\mathcal{T}
    =
    \boldsymbol{\mathcal{P}}_\text{t}
    \left(
    \mathbf{t}^{\mathcal T},
    \frac{1}{n}
    \sum_{i=1}^n
    \operatorname{Div}_{\text{t}}
    \left(D^{\mathcal T},D_i^{\mathcal S}\right)
    \right),\label{eq:prompt_2}
\end{align}
where the prompt generators $\boldsymbol{\mathcal{P}}_\text{s}$ and $\boldsymbol{\mathcal{P}}_\text{t}$ are implemented by MLPs with learnable parameters $\boldsymbol{\Omega}_\text{s}$ and $\boldsymbol{\Omega}_\text{t}$, respectively.

\textbf{Prompt Injection and Fusion.}
With all pre-trained encoders frozen, the generated learnable prompts are injected into the encoder inputs rather than used to update the backbone parameters. The semantic prompt adjusts the aligned target features for semantic encoding, while the temporal prompt adjusts the target features used by the temporal encoder:
\begin{align}
    &\mathbf{Z}^{\mathcal T}_{\mathbf{p}}
    =
    f_{\text{s}}^\star
    \left(
    \left[
    \mathbf{p}_\text{s}^\mathcal{T};
    \widehat{\mathbf X}^{\mathcal T}
    \right],
    \widehat{\mathbf A}^{\mathcal T}
    \right),
    \\
    &\widetilde{\mathbf{Z}}^{\mathcal T}_{\mathbf{p}}\left(t\right)
    =
    f_{\text{t}}^\star
    \left(
    \left[
    \mathbf{p}_\text{t}^\mathcal{T};
    \widehat{\mathbf X}^{\mathcal T}
    \right],
    \mathcal{E}^{\mathcal T},
    \mathbf{r}^\mathcal{T}\left(t\right)
    \right),
\label{eq:inject}
\end{align}
where $\left[\cdot~;~\cdot\right]$ denotes embedding concatenation. For node $v$ at time $t$, the prompted representation is obtained by fusing the semantic and temporal outputs:
\begin{equation}
    \!\!\mathbf{H}^{\mathcal T}\!\!\left(t\right)
    \!=\!
    \phi
    \left(
    \left[
    \mathbf{Z}^{\mathcal T}_{\mathbf{p}};
    \widetilde{\mathbf{Z}}^{\mathcal T}_{\mathbf{p}}\left(t\right)
    \right]
    \right),
    ~
    \phi\left(\cdot\right)
    \!=\!
    \text{LN}\left(\cdot\right)
    \circ
    \text{Linear}\left(\cdot\right),\!\!
\label{eq:fuse}
\end{equation}
where $\text{LN}\left(\cdot\right)$ denotes the LayerNorm operation.

\textbf{Task Objectives.}
To maintain consistency with pre-training, we use the similarity-based task objectives for downstream fine-tuning. This design aligns downstream prediction with the similarity template used in pre-training, so that prompt tuning can adapt the target task without changing the frozen encoders.

\subsubsection{\textbf{Temporal Node Classification}}
Given a labeled support set $\mathcal D_{\text{sup}}^\mathcal{T}=\{(v_i,\mathbf{y}_i,t_i)\}$, define the time-aware class prototype as the mean of support embeddings of class $\mathbf{y}$ at time $t$:
\begin{equation}
    \overline{\mathbf{h}}_{t,\mathbf{y}}^{\mathcal T}
    =
    \frac{1}{
    \left|
    \mathcal D_{\text{sup}}^\mathcal{T}\left(t,\mathbf{y}\right)
    \right|
    }
    \sum\nolimits_{\left(v_i,\mathbf{y}_i=\mathbf{y},t_i=t\right)\in \mathcal D_{\text{sup}}^\mathcal{T}\left(t,\mathbf{y}\right)}
    \mathbf{h}_{v_i}^{\mathcal T}\left(t\right).
\end{equation}
The loss for query $\left(v_i,\mathbf{y}_i,t_i\right)$ is defined by a prototype-softmax:
\begin{align}
    &\mathcal L_{\text{node}}
    \left(
    \boldsymbol{\Omega}_\text{s},
    \boldsymbol{\Omega}_\text{t}
    \right) \notag \\
    =
    &-\!\!\!\!
    \sum_{\left(v_i,\mathbf{y}_i,t_i\right)\in\mathcal D_{\text{sup}}^\mathcal{T}}
    \!\!\!\!\log
    \frac{
    \exp\left(
    g\left(
    \mathbf{h}^{\mathcal T}_{v_i}\left(t_i\right),
    \overline{\mathbf{h}}^{\mathcal T}_{t_i,\mathbf{y}_i}
    \right)/\tau
    \right)
    }{
    \sum\nolimits_{\mathbf{y}\in\mathcal Y}
    \exp\left(
    g\left(
    \mathbf{h}^{\mathcal T}_{v_i}\left(t_i\right),
    \overline{\mathbf{h}}^{\mathcal T}_{t_i,\mathbf{y}}
    \right)/\tau
    \right)
    }.
\label{eq:node}
\end{align}

\subsubsection{\textbf{Temporal Link Prediction}}
Given positive edge $\left(u,v^{+},t\right)$ and negatives $\{(u,v^{-},t)\}$, we apply an InfoNCE~\cite{oord2018representation} loss:
\begin{align}
    \!\!\!&
    \mathcal L_{\text{link}}
    \left(
    \boldsymbol{\Omega}_\text{s},
    \boldsymbol{\Omega}_\text{t}
    \right)
    \notag\\
    =
    &-\!\!\!\!\!\!\!
    \sum_{\left(u,v^{+},v^{-},t\right)}
    \!\!\!\!\!\!\!\log
    \frac{
    \exp\left(
    g\left(
    \mathbf{h}_u^\mathcal{T},
    \mathbf{h}_{v^{+}}^\mathcal{T}
    \right)/\tau
    \right)
    }{
    \exp\!\left(
    g\!\left(
    \mathbf{h}_u^\mathcal{T},
    \mathbf{h}_{v^{+}}^\mathcal{T}
    \right)\!/\tau
    \right)
    \!+\!
    \sum\limits_{v^{-}}
    \exp\!\left(
    g\!\left(
    \mathbf{h}_u^\mathcal{T},
    \mathbf{h}_{v^{-}}^\mathcal{T}
    \right)\!/\tau
    \right)
    }.\!\!\!
\label{eq:link}
\end{align}

\textbf{Overall Fine-tuning Objective.}
The fine-tuning objective combines downstream task loss with routing regularizer:
\begin{align}
    &\mathcal L_{\text{ftn}}
    \left(
    \boldsymbol{\Omega}_\text{s},
    \boldsymbol{\Omega}_\text{t},
    \boldsymbol{\alpha}
    \right)
    =
    \mathcal L_{\text{task}}
    +
    \lambda_r\cdot
    \mathcal R_{\text{routing}}
    \left(
    \boldsymbol{\alpha}
    \right), \label{eq:ftn_overall}\\
    &\text{task}\in
    \left\{
    \text{node},
    \text{link}
    \right\},
\end{align}
where $\lambda_r$ is a hyper-parameter. The task loss adapts the prompts to the downstream objective, while the routing regularizer keeps the adaptation aware of source-target divergence, reducing the risk of relying on mismatched source-domain knowledge.


\section{Algorithms and Complexity Analysis}
\label{app:alg}

We summarize the pre-training procedure of \modelname~in Algorithm~\ref{alg:pretrain} and the fine-tuning procedure in Algorithm~\ref{alg:finetune}. The analysis below shows that both stages scale linearly with graph size, while the additional costs introduced by semantic-temporal decoupling, divergence-aware routing, and prompt learning remain lightweight.

\input{table/alg_pretrain}

\subsection{Complexity Analysis of Algorithm~\ref{alg:pretrain}}

The pre-training stage consists of the semantic branch and the temporal branch. Their costs are mainly determined by feature alignment, message passing, and temporal aggregation.

\begin{itemize}[leftmargin=*]
    \item \textbf{Semantic Branch} (lines 3-9)\textbf{:} For each source domain, feature alignment projects the input feature matrix into the shared semantic space, with complexity $\mathcal{O}\left(|\mathcal{V}_i|d_0d\right)$. The GNN-based aggregation, InfoNCE-based IB objective, and link-based pre-training loss are computed over graph edges, leading to complexity $\mathcal{O}\left(|\mathcal{E}_i|d\right)$. Therefore, the overall cost of the semantic branch for domain $i$ is:
    \begin{equation}
        \mathcal{O}\left(|\mathcal{E}_i|d + |\mathcal{V}_i|d_0d\right).
    \end{equation}

    \item \textbf{Temporal Branch} (lines 11-20)\textbf{:} The temporal branch performs relative-time encoding, TGAT-based temporal propagation, and lightweight adapter refinement. Relative-time encoding and temporal message passing scale with the number of temporal edges, yielding $\mathcal{O}\left(|\mathcal{E}_i|d\right)$. The timers and residual adapters introduce additional linear costs, denoted as $\mathcal{O}\left(|\mathcal{V}_i|r\right)$, where $r \ll d$ is the adapter bottleneck dimension. Thus, the temporal branch also remains linear with respect to the node and edge counts.
\end{itemize}

Aggregating over $n$ source domains, let $N=\sum_i|\mathcal{V}_i|$ and $M=\sum_i|\mathcal{E}_i|$ denote the total number of nodes and edges, respectively. The overall pre-training complexity is:
\begin{equation}
    \mathcal{O}\left(Md + Nd_0d\right).
\end{equation}
The dominant computational costs arise from edge-level message passing and node-level feature alignment. Other components, including IB regularization, timers, and adapters, introduce only minor linear overhead. Therefore, \modelname~preserves scalability for large multi-domain dynamic graphs.

\input{table/alg_funetune}

\subsection{Complexity Analysis of Algorithm~\ref{alg:finetune}}

The fine-tuning stage adapts the frozen pre-trained encoders to a target dynamic graph through divergence-aware routing and lightweight prompt learning. Its computation mainly arises from prototype construction, prompt generation, and task-specific optimization.

\begin{itemize}[leftmargin=*]
    \item \textbf{Prototype Construction and Routing} (lines 3-14)\textbf{:} Computing source prototypes and target semantic-temporal descriptors requires aggregating semantic and temporal representations. The cost is linear in the number of source domains and the target graph size, namely $\mathcal{O}\left(nd + M^{\mathcal T}d + N^{\mathcal T}d\right)$, where $N^{\mathcal T}$ and $M^{\mathcal T}$ denote the number of target nodes and temporal edges.

    \item \textbf{Prompt Generation and Injection} (lines 15-19)\textbf{:} The semantic and temporal prompts are generated by two compact MLPs, whose complexity is $\mathcal{O}\left(dp\right)$ with prompt dimension $p \ll d$. Prompt injection only augments the encoder inputs and does not update the frozen backbone, so it introduces negligible additional cost.

    \item \textbf{Task Optimization} (lines 20-24)\textbf{:} The main cost comes from forward propagation through the frozen semantic and temporal encoders, with complexity $\mathcal{O}\left(M^{\mathcal T}d + N^{\mathcal T}d\right)$. Computing the node classification or link prediction loss and updating prompt parameters add only linear overhead.
\end{itemize}

Combining above, the overall fine-tuning complexity is:
\begin{equation}
    \mathcal{O}\left(nd + M^\mathcal{T} d + N^\mathcal{T} d + dp\right).
\end{equation}
Since $n$ and $p$ are typically small, fine-tuning scales linearly with the target graph size. Moreover, because the pre-trained encoders are frozen and only lightweight prompts and routing-related parameters are optimized, \modelname~supports efficient and parameter-efficient cross-domain adaptation.

%% file: fig/framework_marked.tex
\begin{tikzpicture}[very thick, black]
\node[draw=none,fill=none] (image) at (0,0){\includegraphics{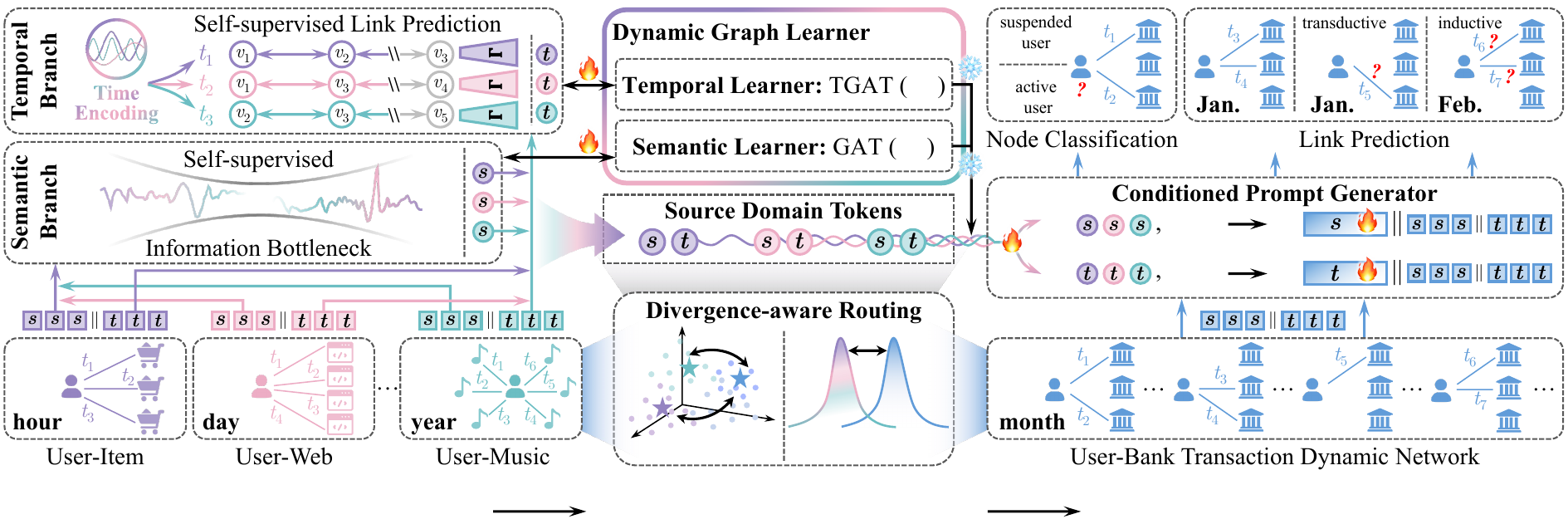}};


\node[align=center, color=black, font=\LARGE] at (-15.3, -1.99) {$\mathcal{G}_1^\mathcal{S}$};
\node[align=center, color=black, font=\LARGE] at (-11.45, -1.99) {$\mathcal{G}_2^\mathcal{S}$};
\node[align=center, color=black, font=\LARGE] at (-7.2, -1.99) {$\mathcal{G}_n^\mathcal{S}$};
\node[align=center, color=black, font=\LARGE] at (4.84, -1.99) {$\mathcal{G}^\mathcal{T}$};

\node[align=center, color=black, font=\Large] at (-1.75, -3.76) {$\operatorname{Div}_\text{s}$};
\node[align=center, color=black, font=\Large] at (1.7, -3.76) {$\operatorname{Div}_\text{t}$};

\node[align=center, color=black, font=\Large] at (-15.4, -0.48) {$\widehat{\mathbf{X}}^\mathcal{S}$};
\node[align=center, color=black, font=\Large] at (-14.3, 1.34) {$\widehat{\mathbf{X}}^\mathcal{S}$};
\node[align=center, color=black, font=\Large] at (-10.64, 1.33) {$\mathbf{Z}^\mathcal{S}$};
\node[align=center, color=black, font=\Large] at (-7, 1.33) {$\mathbf{Z}^{\mathcal{S}+}$};
\node[align=center, color=black, font=\Large] at (-7.35, 2.16) {$\mathbf{Z}^{\mathcal{S}+}$};
\node[align=center, color=black, font=\Large] at (-7.35, 0.5) {$\mathbf{Z}^{\mathcal{S}+}$};
\node[align=center, color=black, font=\Large] at (-4.4, -0.48) {$\mathbf{Z}^{\mathcal{S}}\!(t)$};

\node[align=center, color=black, font=\Large] at (2.8, 3.62) {$f_\text{t}$};
\node[align=center, color=black, font=\Large] at (2.6, 2.33) {$f_\text{s}$};
\node[align=center, color=black, font=\Large] at (2.7, 4.688) {$h\!=\!g\!\circ\! f$};

\node[align=center, color=black, font=\Large] at (5.63, 0.8) {$\boldsymbol{\mathcal{P}}\!_\text{s}\big($};
\node[align=center, color=black, font=\Large] at (8.44, 0.8) {$\operatorname{Div}_\text{s}\!\big)$};
\node[align=center, color=black, font=\Large] at (10.2, 0.83) {$\mathbf{p}_\text{s}^\mathcal{T}$};

\node[align=center, color=black, font=\Large] at (5.621, -0.2) {$\boldsymbol{\mathcal{P}}\!_\text{t}\big($};
\node[align=center, color=black, font=\Large] at (8.425, -0.182) {$\operatorname{Div}_\text{t}\!\big)$};
\node[align=center, color=black, font=\Large] at (10.2, -0.16) {$\mathbf{p}_\text{t}^\mathcal{T}$};

\node[align=center, color=black, font=\LARGE, scale=0.97] at (-10, -5.05) {\hyperref[sec:pretrain]{\textbf{(1)}} \textbf{Dual-Branch Pre-training}};
\node[align=center, color=black, font=\LARGE, scale=0.97] at (0, -5.05) {\hyperref[sec:moe]{\textbf{(2)}} \textbf{Cross-Domain Adaptation}};
\node[align=center, color=black, font=\LARGE, scale=0.97] at (9.95, -5.05) {\hyperref[sec:finetune]{\textbf{(3)}} \textbf{Conditioned Fine-tuning}};

\end{tikzpicture}

%% file: table/alg_pretrain.tex
\begin{algorithm}
\caption{Overall pre-training pipeline of \modelname.}
\label{alg:pretrain}
\KwIn{Source dynamic graphs $\{\mathcal{G}_i^{\mathcal{S}}\}_{i=1}^n$ from domain $\{D^{\mathcal{S}}\}$ with labels $\{Y^{\mathcal{S}}\}$; Each $\mathcal{G}_i^{\mathcal{S}}$ is with node set $\mathcal{V}_i$, temporal edge set $\mathcal{E}_i=\{(u,v,t)\}$, and a feature matrix $\mathbf{X}_i\in\mathbb{R}^{|\mathcal{V}_i|\times d_i}$; Hidden dimension $d$; Learning rate $\eta$; Temperature parameter $\tau$; Hyper-parameter $\beta, \lambda_1$; Pre-training epochs $E_1, E_2, E_3$.}
\KwOut{Dynamic graph encoder $f=f_\text{s} \circ f_\text{t}$ with parameters $\boldsymbol{\theta}^\star=\boldsymbol{\theta}^\star_\text{s}\circ \boldsymbol{\theta}^\star_\text{t}$; Domain-specific timers $\{\boldsymbol{\mathcal{T}}_i\}_{i=1}^n$ with parameters $\{\boldsymbol{\Phi}_i\}_{i=1}^n$; Domain-specific adapters $\{\boldsymbol{\Gamma}_i\}_{i=1}^n$ with parameters $\{\boldsymbol{\Psi}_i\}_{i=1}^n$; Semantic tokens $\{\mathbf{Z}_i^{\mathcal S}\}_{i=1}^n$; Temporal tokens $\{\widetilde{\mathbf{Z}}_i^{\mathcal S}(t)\}_{i=1}^n$.}
\BlankLine
Initialize all learnable parameters randomly\;
\textcolor{gray}{\tcp{Semantic Branch}}
\For{$e_1=1,2,\cdots, E_1$}{
Remove timestamps and merge edges for each $\mathcal{G}_i^\mathcal{S}$\;
Calculate semantic tokens $\mathbf{Z}_i^\mathcal{S} \gets$ Eq.~\eqref{eq:ssib_next} with $f_\text{s}$\;
Calculate IB objective $\mathcal{L}_\text{SS-IB} \gets$ Eq.~\eqref{eq:ssib}\;
Calculate task loss $\mathcal{L}_\text{pre}\big(\boldsymbol{\theta}_\text{s},\mathbf{Z}^{\mathcal S}\big) \gets$ Eq.~\eqref{eq:pre_sem}\;
Calculate semantic loss $\mathcal{L}_\text{pre-sem} \gets$ Eq.~\eqref{eq:pre_sem_overall}\;
Update $\boldsymbol{\theta}_\text{s}$ by minimizing $\mathcal{L}_\text{pre-sem}$ and back-propagation with learning rate $\eta$\;
}
\BlankLine
\textcolor{gray}{\tcp{Temporal Branch}}
\For{$e_2=1,2,\cdots, E_2$}{
For each edge $(u,v,t)\in\mathcal E_i$ with relative time $t$, calculate its relative-time encoding $\mathbf{r}^{\mathcal S}_i(t)\!\gets\!$ Eq.~\eqref{eq:rte} with $\{\boldsymbol{\mathcal{T}}_i\}_{i=1}^n$\;
\For{$e_3=1,2,\cdots, E_3$}{
Update the temporal tokens $\widetilde{\mathbf{Z}}_i^\mathcal{S}(t) \gets$ Eq.~\eqref{eq:update} with $f_\text{t}$ and $\{\boldsymbol{\Gamma}_i\}_{i=1}^n$ for each $\mathcal{G}_i^\mathcal{S}$\;
\BlankLine
\textcolor{gray}{\tcp{Phase I}}
Sample mini-batch $\mathcal B_i \subset \mathcal G_i^{\mathcal S}$ for each $G_i^{\mathcal S}$\;
Calculate the temporal pre-training loss for TGAT backbone $\mathcal{L}_\text{pre-tem}^{(i)}\gets$ Eq.~\eqref{eq:pre_loss_temp}\;
Update $\boldsymbol{\theta}_\text{t}, \boldsymbol{\Phi}_i$ by minimizing $\mathcal{L}_\text{pre-tem}^{(i)}$ and back-propagation with learning rate $\eta$\;
\BlankLine
\textcolor{gray}{\tcp{Phase II}}
Update $\boldsymbol{\Psi}_i$ by minimizing $\mathcal{L}_\text{pre-tem}^{(i)}$ and back-propagation with learning rate $\eta$, while keeping the updated $\nabla_{\boldsymbol{\theta}_\text{t}}=\boldsymbol{0}$ and $\nabla_{\boldsymbol{\Phi}_i}=\boldsymbol{0}$\;
}
}

\end{algorithm}

%% file: table/alg_funetune.tex
\begin{algorithm}
\caption{Overall fine-tuning pipeline of \modelname.}
\label{alg:finetune}
\KwIn{Target dynamic graph $\mathcal G^{\mathcal T}$ with a feature matrix $\mathbf{X}\in\mathbb{R}^{|\mathcal{V}_i|\times d_i}$; Dynamic graph encoder $f=f_\text{s} \circ f_\text{t}$ with parameters $\boldsymbol{\theta}^\star=\boldsymbol{\theta}^\star_\text{s}\circ \boldsymbol{\theta}^\star_\text{t}$; Timers $\{\boldsymbol{\mathcal{T}}_i\}_{i=1}^n$ with parameters $\{\boldsymbol{\Phi}_i\}_{i=1}^n$; Adapters $\{\boldsymbol{\Gamma}_i\}_{i=1}^n$ with parameters $\{\boldsymbol{\Psi}_i\}_{i=1}^n$; Semantic tokens $\{\mathbf{Z}_i^{\mathcal S}\}_{i=1}^n$; Temporal tokens $\{\widetilde{\mathbf{Z}}_i^{\mathcal S}(t)\}_{i=1}^n$; Hidden dimension $d$; Learning rate $\eta$; Temperature $\tau$; Hyper-parameter $\gamma,\lambda_\text{s},\lambda_\text{t},\lambda_r,\lambda_2,\lambda_3$; Fine-tuning epochs $E_4$.}
\KwOut{Fine-tuned dynamic graph learner $h^\star=g^\star\circ f^\star$, where $f^\star=f_\text{s}^\star \circ f_\text{t}^\star$ with parameters $\{\boldsymbol{\theta}^\star_\text{s}, \boldsymbol{\theta}^\star_\text{t},$ $\boldsymbol{\Omega}^\star_\text{s},\boldsymbol{\Omega}^\star_\text{t}\}$.}
\BlankLine
Initialize all learnable parameters randomly\;
\For{$e_4=1,2,\cdots, E_4$}{
\textcolor{gray}{\tcp{Expert Pool and Prototypes}}
Calculate semantic prototype $\mathbf{s}_i^{\mathcal S} \gets$ Eq.~\eqref{eq:prototype}\;
Calculate temporal prototype $\mathbf{t}_i^{\mathcal S} \gets$ Eq.~\eqref{eq:prototype}\;
Calculate target semantic descriptor $\mathbf{s}^{\mathcal T} \gets$ Eq.~\eqref{eq:s_descrip}\;
Calculate target temporal descriptor $\mathbf{t}^{\mathcal T} \gets$ Eq.~\eqref{eq:t_descrip}\;
\textcolor{gray}{\tcp{Divergence-aware Routing}}
Calculate semantic divergence $\operatorname{Div}_{\text{s}}\big(D^{\mathcal T},D_i^{\mathcal S}\big) \gets$ Eq.~\eqref{eq:s_div} between target and source domain\;
Calculate temporal divergence $\operatorname{Div}_{\text{t}}\big(D^{\mathcal T},D_i^{\mathcal S}\big) \gets$ Eq.~\eqref{eq:t_div} between target and source domain\;
Calculate combined divergence $\operatorname{Div}_i\gets$ Eq.~\eqref{eq:div_comb}\;
\textcolor{gray}{\tcp{Routing Optimization}}
Initialize routing weights $\boldsymbol{\alpha}\gets$ Eq.~\eqref{eq:routing}\;
Calculate routing regularizer $\mathcal R_{\text{routing}}\gets$ Eq.~\eqref{eq:routing_r}\;
\textcolor{gray}{\tcp{Conditioned Prompt Generation}}
Initialize semantic and temporal prompts $(\mathbf{p}_\text{s}^\mathcal{T},\mathbf{p}_\text{t}^\mathcal{T}) \gets$ Eq.~\eqref{eq:prompt_1}, Eq.~\eqref{eq:prompt_2} with prompt generators $(\boldsymbol{\mathcal{P}}_\text{s}, \boldsymbol{\mathcal{P}}_\text{t})$\;
\textcolor{gray}{\tcp{Prompt Injection and Fusion}}
Inject prompts at encoders, and obtain semantic and temporal node embeddings $\mathbf{Z}^{\mathcal T}_{\mathbf{p}}, \widetilde{\mathbf{Z}}^{\mathcal T}_{\mathbf{p}}(t) \gets$ Eq.~\eqref{eq:inject}\;
Obtain prompted node embeddings $\mathbf{H}^{\mathcal T}(t) \gets$ Eq.~\eqref{eq:fuse} by a lightweight fusion\;
\textcolor{gray}{\tcp{Task Objectives}}
Calculate node classification loss $\mathcal L_{\text{node}}\gets$ Eq.~\eqref{eq:node}\;
Calculate link prediction loss $\mathcal L_{\text{link}}\gets$ Eq.~\eqref{eq:link}\;
Calculate overall fine-tuning loss $\mathcal L_{\text{ftn}} \gets$ Eq.~\eqref{eq:ftn_overall}\;
Update $\boldsymbol{\Omega}_\text{s},\boldsymbol{\Omega}_\text{t}$ by minimizing $\mathcal L_{\text{ftn}}$ and back-propagation with learning rate $\eta$\;
}

\end{algorithm}

%% file: 5_experiment.tex
\section{Experiment}
\label{sec:exp}
In this section, we conduct extensive experiments to evaluate the proposed \modelname\footnote{\url{https://github.com/RingBDStack/DyGFM}.} over the following research questions:
\begin{itemize}[leftmargin=1.5em]
    \item \textbf{\textit{RQ1:}} How effective does the proposed \modelname~transfer across different datasets and domains? ($\rhd$~Section~\ref{sec:rq1})
    \item \textbf{\textit{RQ2:}} Which module contributes most? ($\rhd$~Section~\ref{sec:rq2})
    \item \textbf{\textit{RQ3:}} How time-efficient in fine-tuning? ($\rhd$~Section~\ref{sec:rq3})
    \item \textbf{\textit{RQ4:}} How interpretable are the routing weights among the source domain experts? ($\rhd$~Section~\ref{sec:rq4})
    \item \textbf{\textit{RQ5:}} How do node embeddings evolve? ($\rhd$~Section~\ref{sec:rq5})
    \item \textbf{\textit{RQ6:}} How sensitive to hyper-parameters? ($\rhd$~Section~\ref{sec:rq6})
\end{itemize}

\subsection{Experimental Settings}
\label{sec:exp_setting}

\input{table/dataset}
\subsubsection{\textbf{Datasets}}

We conduct experiments on four widely used real-world continuous-time dynamic graph datasets, covering different domains with heterogeneous semantics and temporal granularities. The statistics are summarized in Table~\ref{tab:dataset}.
\begin{itemize}[leftmargin=1.5em]
    \item \texttt{Wikipedia}~\cite{kumar2019predicting}: a user-page editing network, where temporal edges denote editing activities and dynamic labels indicate whether a user is temporarily banned from editing.
    \item \texttt{Reddit}~\cite{kumar2019predicting}: a user-post interaction network on subreddits, where edges represent users commenting on posts and labels denote whether a user is banned from posting.
    \item \texttt{MOOC}~\cite{kumar2019predicting}: a student-course activity network from an online education platform, where temporal edges record learning behaviors and labels indicate whether a student drops out after an activity.
    \item \texttt{Genre}~\cite{huang2023temporal}: a user-genre listening network, where edges represent users listening to music genres over time and labels indicate users' preferred music genres.
\end{itemize}

\subsubsection{\textbf{Baselines}}

We compare \modelname~with 12 state-of-the-art baselines from four categories, covering conventional DGNNs, dynamic graph pre-training methods, static graph foundation models, and dynamic graph prompting methods. For fair comparison, all baselines are adapted to the same pre-training and fine-tuning settings whenever applicable.

\begin{itemize}[leftmargin=1.5em]
    \item \textbf{Naive DGNNs:} \texttt{TGN}~\cite{rossi2020temporal} uses memory modules to capture historical interactions, \texttt{TGAT}~\cite{xu2020inductive} employs time encoding and temporal attention for inductive representation learning, \texttt{ROLAND}~\cite{you2022roland} adapts static GNNs to dynamic settings through recurrent state updates, and \texttt{TREND}~\cite{wen2022trend} models temporal event and node dynamics with Hawkes-process-based mechanisms. Since these methods are not originally designed for multi-domain pre-training, we train them on the available source-domain dynamic graphs under the same protocol as \modelname, so that they can access the same source-domain data.

    \item \textbf{Dynamic Graph Pre-training:} \texttt{DDGCL}~\cite{tian2021self} learns temporal consistency through dynamic contrastive learning with debiased sampling, while \texttt{CPDG}~\cite{bei2024cpdg} performs contrastive pre-training by jointly modeling structural and temporal signals. These methods are originally developed for dynamic graph pre-training, but mainly in single-domain settings. For fair comparison, we extend them to the multi-domain setting by mixing the available source-domain events during pre-training, and then adapt them to downstream target tasks with the same few-shot splits as \modelname.

    \item \textbf{Static GFMs:} \texttt{GraphPrompt}~\cite{liu2023graphprompt} introduces a unified pre-train-prompt paradigm for static graphs, \texttt{ProG}~\cite{zi2024prog} provides graph prompting protocols, \texttt{GCOPE}~\cite{zhao2024all} improves cross-domain adaptation with coordinator-based transfer, and \texttt{SAMGPT}~\cite{yu2025samgpt} aligns multi-domain graphs with structure tokens and dual prompts. Since these methods are designed for static graphs, we first remove timestamps from each available source dynamic graph and merge temporal edges into a static graph for multi-domain pre-training. During downstream evaluation, we attach a TGAT~\cite{xu2020inductive} temporal encoder to process temporal interactions, so that static GFMs can be evaluated on the same dynamic node classification and link prediction tasks. This adaptation provides static GFMs with access to multi-domain graph knowledge while allowing them to handle temporal downstream inputs as fairly as possible.

    \item \textbf{Dynamic Graph Prompting Methods:} \texttt{TIGPrompt}~\cite{chen2024prompt} introduces a temporal prompt generator to bridge temporal and semantic gaps in temporal interaction graphs, while \texttt{DyGPrompt}~\cite{yu2025nodetime} designs node-time conditional prompts to capture evolving node-time patterns. These methods are closer to dynamic graph foundation modeling than conventional DGNNs, but they are mainly developed under single-domain pre-training and adaptation settings. For fair comparison, we extend them to the multi-domain scenario by mixing the available source domains during pre-training. Their prompt tuning procedures follow the original designs, with the same support, validation, and test splits as \modelname.
\end{itemize}

\subsubsection{\textbf{Pre-training and Fine-tuning Settings}}

We evaluate \modelname~and baselines under two adaptation settings:
\begin{itemize}[leftmargin=1.5em]
    \item \textbf{ASDA} (\underline{\textbf{A}}ll-\underline{\textbf{S}}een-\underline{\textbf{D}}omain-\underline{\textbf{A}}daptation): All four domains are included during pre-training, and each domain is later used for downstream adaptation.
    \item \textbf{LODO} (\underline{\textbf{L}}eave-\underline{\textbf{O}}ne-\underline{\textbf{D}}omain-\underline{\textbf{O}}ut): One domain is excluded from pre-training and used only for downstream adaptation, which evaluates the transferability to unseen domains.
\end{itemize}

For each domain, the events (temporal edges) are ordered chronologically to preserve causality. We use the first 80\% of events for pre-training and the remaining 20\% for downstream evaluation. The downstream portion is further split into a 1\% support set, a 1\% validation set, and an 18\% test set. This setting simulates few-shot adaptation, where only limited target-domain labels are available.

During pre-training, \modelname~utilizes a two-layer GAT as the semantic encoder and a two-layer TGAT as the temporal encoder. The model is pre-trained for up to 5,000 epochs with an early stopping strategy. During fine-tuning, all pre-trained encoders are frozen, and only the divergence-aware router and divergence-conditioned prompt generators are updated. Fine-tuning is conducted for up to 300 epochs with early stopping based on validation AUC. We use Adam for optimization, with the learning rate in $[$10$^{-\text{4}}$,10$^{-\text{1}}]$ and weight decay in $[$10$^{-\text{5}}$,10$^{-\text{1}}]$. The fine-tuning hyper-parameters $\lambda_{\text{s}}$, $\lambda_{\text{t}}$, and $\lambda_r$ are in $[\text{0}$, $\text{1}]$, $[\text{0}$, $\text{1}]$, and $[\text{0.001}$, $\text{1}]$, respectively.

\subsubsection{\textbf{Downstream Tasks and Evaluation}}

We evaluate two downstream tasks: temporal node classification and temporal link prediction. For temporal node classification, dynamic labels are aligned with event timestamps, and the model predicts the label of a node at the corresponding time. For temporal link prediction, positive and negative samples are constructed at each timestamp, and performance is evaluated under both transductive and inductive settings. We report AUC-ROC (\%) with the mean and standard deviation over five repeated runs.

\subsubsection{\textbf{Implementation Environment}}

All experiments are conducted on a server with Ubuntu 20.04 LTS, an Intel(R) Xeon(R) Platinum 8358 CPU@2.60GHz, 1TB DDR4 memory, and an NVIDIA Tesla A100 GPU with 80GB memory. The software environment includes CUDA 10.1, Python 3.8.12, PyTorch 1.9.1, and PyTorch Geometric 2.0.1.

\input{table/res_main}

\subsection{RQ1: Transfer across Domains and Tasks}
\label{sec:rq1}

\textit{RQ1} evaluates the effectiveness of \modelname~in transferring across different domains and downstream tasks under different adaptation settings. We conduct experiments on both link prediction and node classification over {four} continuous-time dynamic graph datasets, considering {three} experimental setups.

\subsubsection{\textbf{ASDA Setting} (Table~\ref{tab:main})}
In this setting, all domains are included during pre-training, and fine-tuning is performed on the downstream portion of the same domains. This setting evaluates in-domain transfer efficiency under the few-shot scenario, where only 1\% of the target labels are available.
The results show that \modelname~consistently achieves the best performance across all datasets and tasks. The largest improvement appears on \texttt{MOOC} node classification, where \modelname~outperforms \texttt{DyGPrompt} by 5.0\%. Compared with single-domain dynamic prompt baselines such as \texttt{DyGPrompt} and \texttt{TIGPrompt}, \modelname~shows stronger in-domain generalization by leveraging divergence-aware routing, which prevents overfitting to a single source. The consistent gains under both transductive and inductive link prediction further indicate that semantic-temporal decoupling brings complementary benefits: semantic prototypes provide global structural priors, while temporal tokens capture dynamic evolution.

\subsubsection{\textbf{LODO Setting} (Table~\ref{tab:main})}
In this setting, one domain is excluded from pre-training and used only for fine-tuning, directly testing cross-domain generalization to unseen domains.
The results show that \modelname~again achieves the highest scores on most datasets and tasks, with notable relative improvements of 8.0\% on \texttt{MOOC} node classification and 7.1\% on \texttt{Reddit} transductive link prediction. These improvements demonstrate that divergence-aware routing effectively mitigates negative transfer by dynamically adjusting domain mixture weights according to semantic and temporal discrepancies. Compared with static GFMs such as \texttt{GCOPE} and \texttt{SAMGPT}, \modelname~exhibits stronger adaptability to unseen dynamic patterns and irregular temporal shifts, since divergence-conditioned prompts are expected to selectively amplify transferable knowledge from semantically similar source domains. Overall, the results confirm that \modelname~not only performs well in in-domain few-shot scenarios, but also generalizes effectively across domains with distinct semantics and temporal granularities.

\textbf{Key Takeaway:}
\modelname~consistently excels under both ASDA and LODO settings, demonstrating strong few-shot adaptability and robust cross-domain generalization.

\input{table/res_klodo}


\subsubsection{$K$\textbf{-LODO Setting} (Table~\ref{tab:klodo})}
We further consider a more challenging Leave-One-or-$K$-Domains-Out setting ($K>1$), where some domains are excluded from pre-training. This setup evaluates the robustness of \modelname~under larger domain gaps.

The results show that \modelname~maintains satisfying performance even when multiple domains are removed from pre-training, with only minor degradation compared with the single-domain LODO case. The averaged AUC-ROC across all target domains remains competitive for node classification, indicating stable cross-domain generalization. We also observe that different source-domain combinations exhibit complementary effects. Semantically or temporally related domains, such as \texttt{Wikipedia} + \texttt{Reddit}, yield higher transferability, whereas more dissimilar pairs, such as \texttt{MOOC} + \texttt{Genre}, show reduced gains. This suggests that semantic proximity helps bridge structural and temporal gaps. In addition, increasing $K$ does not always improve performance. Although multi-source pre-training enriches domain coverage, it may also introduce conflicting temporal or semantic dynamics. The slight variance across target domains, such as stronger results on \texttt{MOOC} and weaker results on \texttt{Reddit}, suggests that adaptive routing and divergence-conditioned prompting can balance universality and domain specificity without overfitting.

\subsection{\textit{RQ2:} Ablation Study}
\label{sec:rq2}

\begin{figure}[t]
    \centering
    \includegraphics[width=0.325\textwidth]{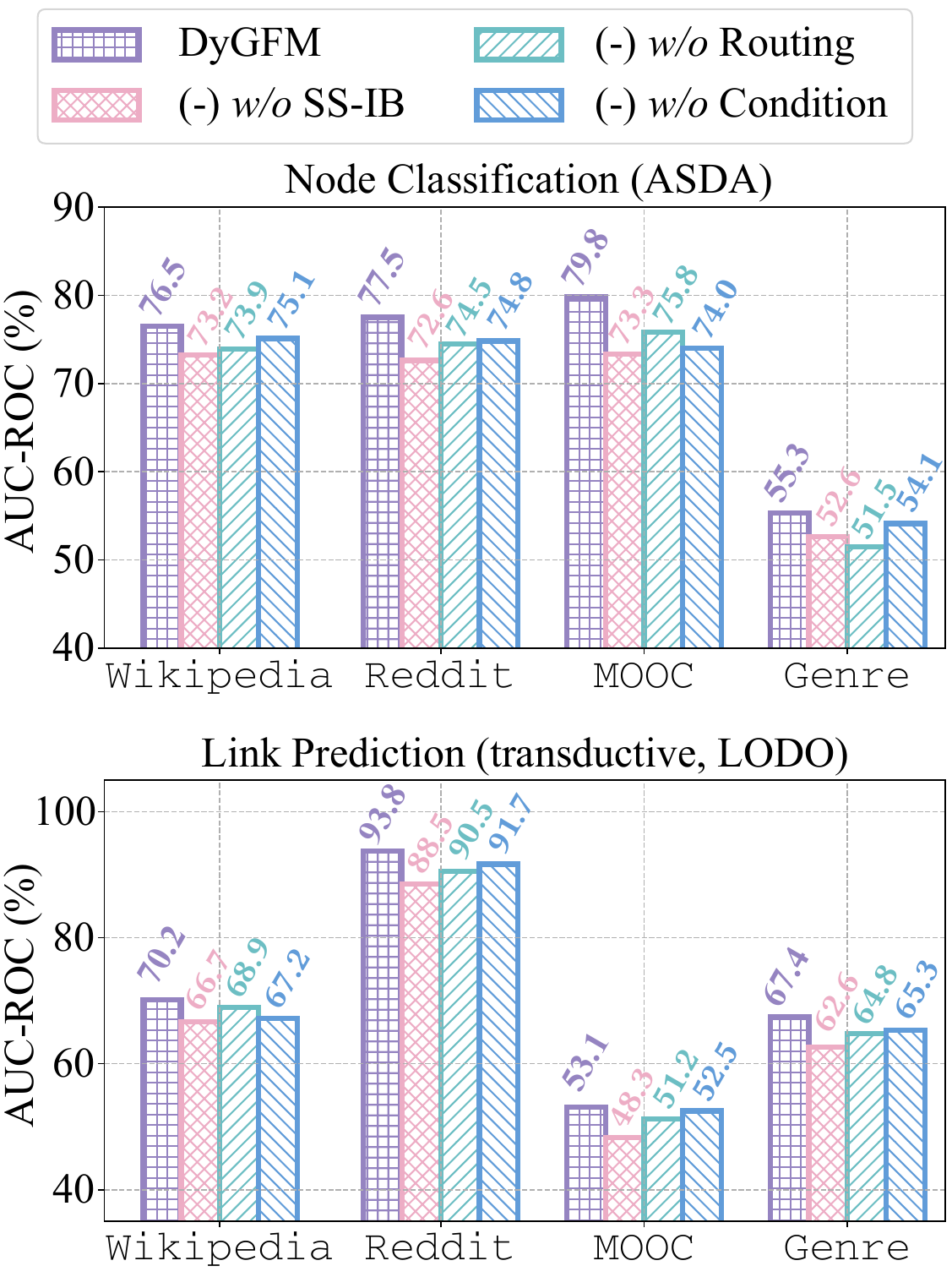}
    \caption{Ablation study on node classification and transductive link prediction.}
    \label{fig:abla}
\end{figure}

We conduct ablation studies on three core modules:
\begin{itemize}[leftmargin=1.5em]
    \item \textbf{\modelname~(\textit{w/o} SS-IB):} We disable the information bottleneck aligner in the semantic branch (Eq.~\eqref{eq:ssib}), preventing filtering redundant domain-specific noise during pre-training.
    \item \textbf{\modelname~(\textit{w/o} Routing):} We remove the divergence-aware router and directly average source experts (Eq.~\eqref{eq:routing}), ablating dynamic balancing between universality and specificity.
    \item \textbf{\modelname~(\textit{w/o} Condition):} We replace the divergence-conditioned prompt generator with a randomly initialized prompt (Eq.~\eqref{eq:prompt_1}, Eq.~\eqref{eq:prompt_2}), removing the conditioning on both specific task and divergence.
\end{itemize}
Results in Figure~\ref{fig:abla} show that removing ``SS-IB'' noticeably degrades performance across all datasets. This confirms that the self-supervised information bottleneck in the semantic branch effectively suppresses noise while preserving transferable semantic priors for downstream tasks.
Removing ``Routing'' leads to moderate but consistent performance drops, \eg, –3.4\% on average for node classification and –2.3\% for link prediction. Without divergence-aware routing, the model cannot dynamically select suitable source-domain experts under domain shifts, which weakens its cross-domain adaptability.
Removing ``Condition'' also reduces performance, particularly on domains with stronger heterogeneity such as \texttt{Genre}. This indicates that divergence-conditioned prompts are important for adjusting domain-specific biases and enabling fine-grained adaptation under different semantic and temporal dynamics.

\textbf{Key Takeaway:}
The three modules provide complementary benefits: semantic compression stabilizes transferable knowledge, routing improves source-domain relevance, and conditioned prompting refines adaptation granularity.

\begin{figure*}[!t]
    \begin{minipage}{0.325\textwidth}
        \centering
        \includegraphics[width=\textwidth]{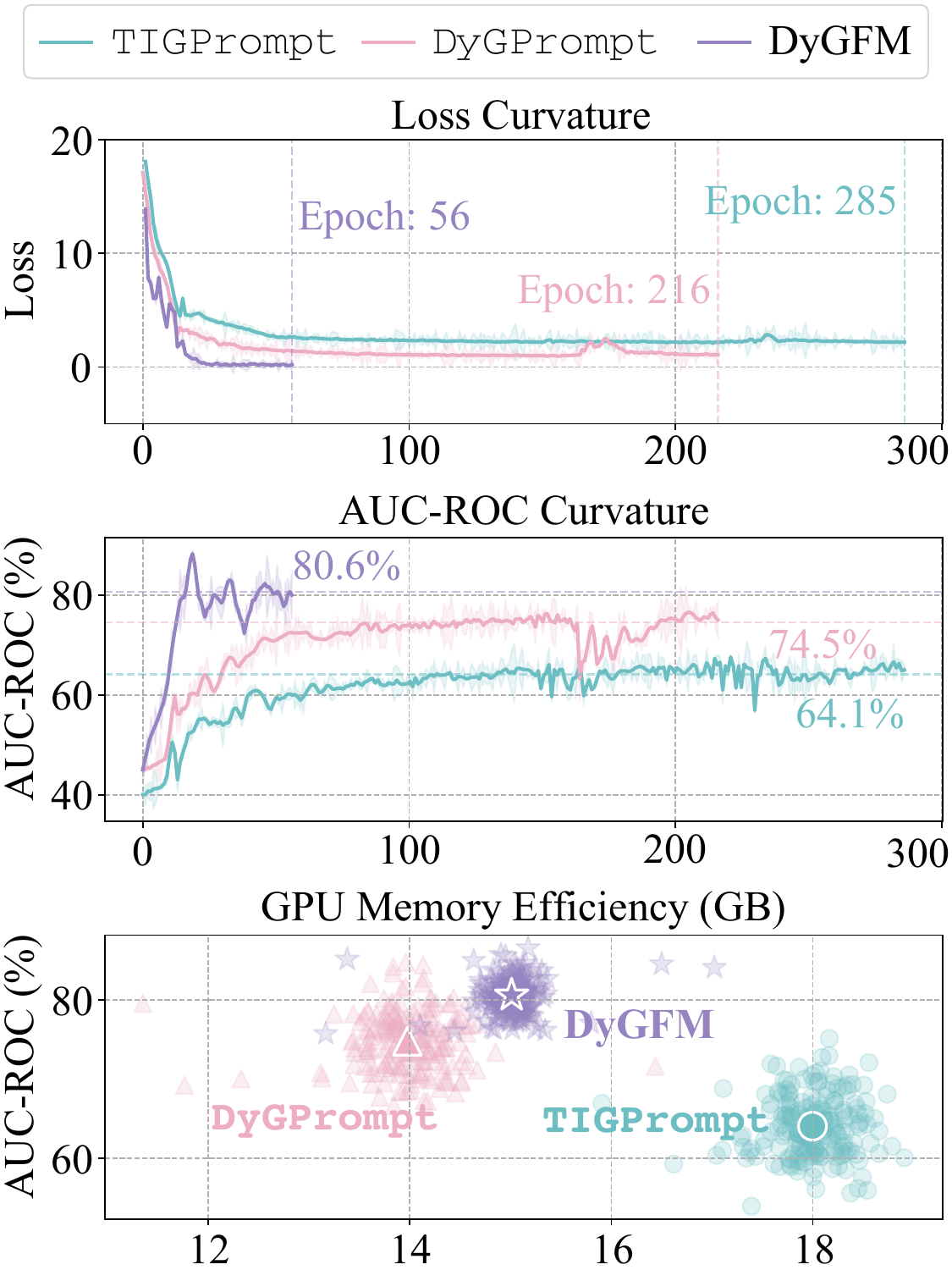}
        \caption{Efficiency Analysis on \texttt{Reddit}.}
        \label{fig:curve}
    \end{minipage} 
    \hfill
    \begin{minipage}{0.325\textwidth}
        \centering
        \includegraphics[width=\textwidth]{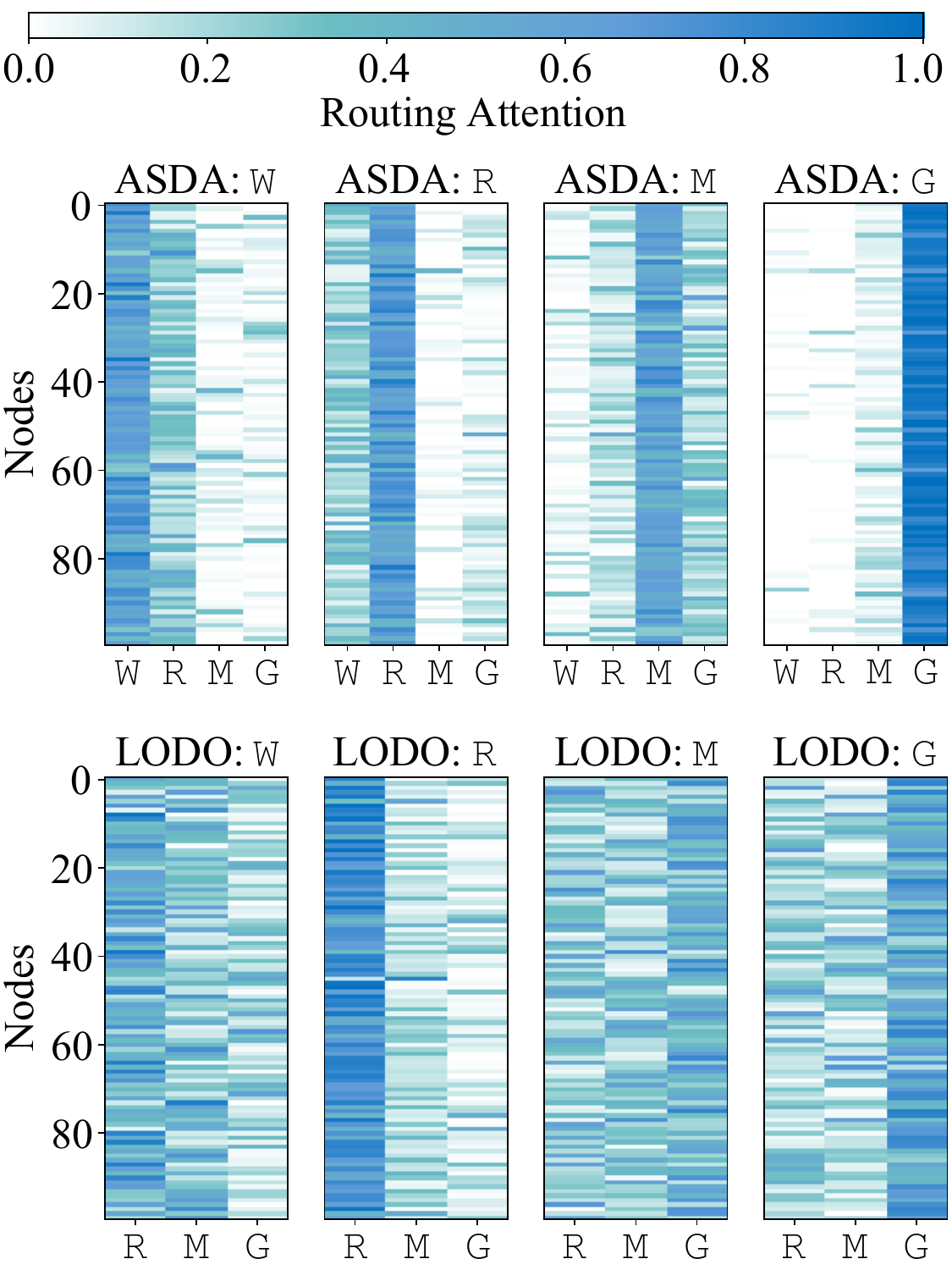}
        \caption{Routing Visualizations.}
        \label{fig:heat}
    \end{minipage} 
    \hfill
    \begin{minipage}{0.325\textwidth}
        \centering
        \includegraphics[width=\textwidth]{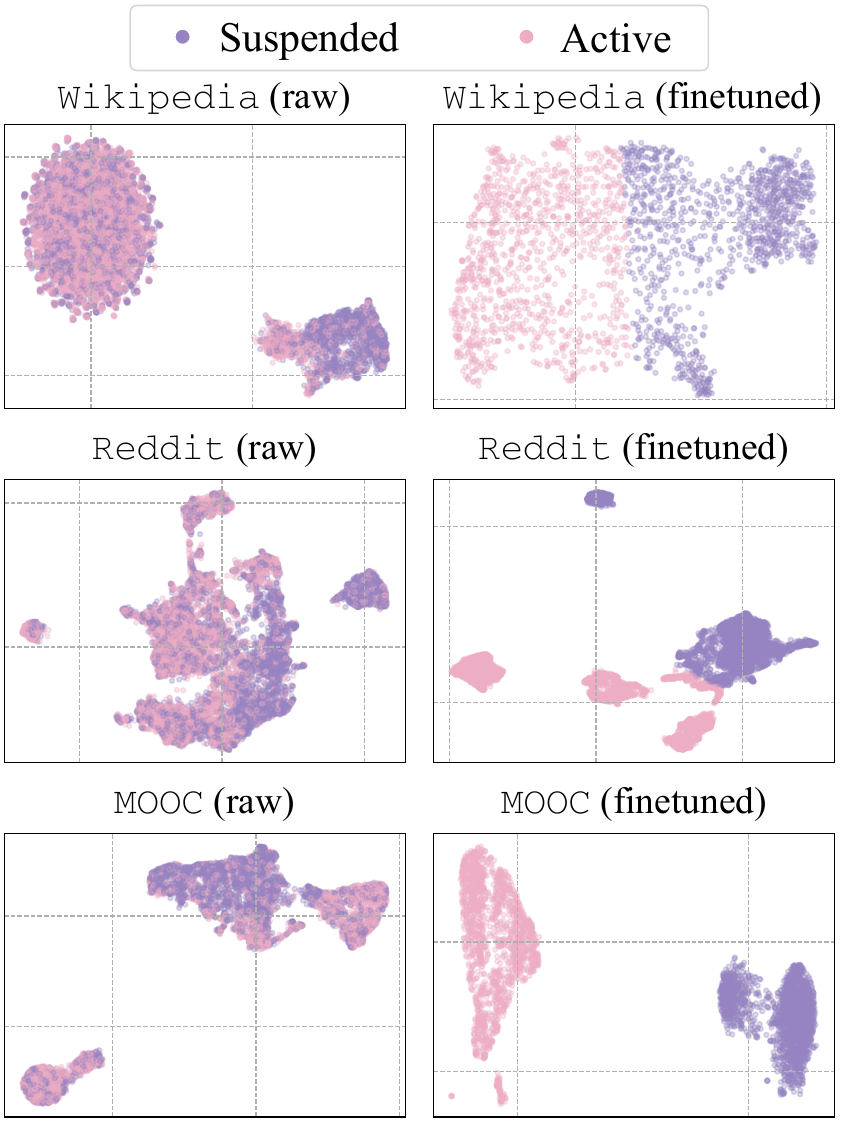}
        \caption{User Node Visualizations.}
        \label{fig:embedding}
    \end{minipage}
\end{figure*}

\subsection{\textit{RQ3:} Fine-tuning Efficiency}
\label{sec:rq3}

To test fine-tuning efficiency, we compare \modelname~with two competitive baselines on \texttt{Reddit} for inductive link prediction.

Results in Figure~\ref{fig:curve} show \modelname~converges the fastest, stabilizing within 56 epochs, compared with 216 epochs for \texttt{DyGPrompt} and 285 epochs for \texttt{TIGPrompt}. This indicates that divergence-conditioned prompting provides a more effective initialization for downstream adaptation.
\modelname~also achieves the highest AUC-ROC of 80.6\%, with its smoother convergence curves and fewer fluctuations. It suggests that semantic-temporal priors help guide optimization more stably during fine-tuning.
In terms of GPU memory usage, \modelname~falls between the two baselines: it consumes slightly more memory than \texttt{DyGPrompt}, but substantially less than \texttt{TIGPrompt} (approximately 15 GB, 14 GB, and 18 GB). Considering its higher predictive performance, \modelname~achieves the best accuracy-memory trade-off.

\textbf{Key Takeaway:}
\modelname~achieves the fastest convergence and highest task performance with moderate GPU usage, demonstrating superior fine-tuning efficiency and scalability.



\subsection{\textit{RQ4:} Interpretability of Routing Weights}
\label{sec:rq4}

To examine whether the learned routing weights are interpretable and informative, we randomly sample 100 nodes from each target domain and visualize the normalized routing attention toward different source prototypes for node classification.

The results are shown in Figure~\ref{fig:heat}, where columns denote source domains (\texttt{W}: \texttt{Wikipedia}, \texttt{R}: \texttt{Reddit}, \texttt{M}: \texttt{MOOC}, \texttt{G}: \texttt{Genre}), and each row corresponds to the routing distribution of one sampled node. In the ASDA setting, nodes consistently assign the highest routing weights to their own-domain prototypes, suggesting that the router effectively captures domain-aligned semantics and reuses intra-domain knowledge. In the LODO setting, routing patterns become softer and more distributed. Unseen target domains activate multiple source experts, such as \texttt{Genre} nodes attending to both \texttt{Reddit} and \texttt{MOOC}, demonstrating adaptive knowledge blending across related domains.

\textbf{Key Takeaway:}
The routing mechanism is interpretable and adaptive. When the target domain is seen during pre-training, it assigns higher attention to domain-specific prototypes for efficient knowledge reuse. When the target domain is unseen, it flexibly redistributes weights across semantically related source prototypes, revealing a balance between specialization and generalization.

\subsection{\textit{RQ5:} Evolution of Node Embeddings}
\label{sec:rq5}

To examine how node representations evolve during fine-tuning, we visualize user embeddings before and after adaptation using UMAP~\cite{mcinnes2018umap} on three representative user interaction datasets. These datasets contain dynamic labels indicating whether each user is Active or Suspended, which allows us to analyze the separation of behavioral states.

Results in Figure~\ref{fig:embedding} show that raw embeddings of users with different activity states are highly entangled. This suggests that pre-training mainly captures general structural and temporal regularities, rather than task-specific discriminative semantics. After fine-tuning, the embeddings of the two user types become more clearly separable, especially on \texttt{Wikipedia} and \texttt{MOOC}, where distinct cluster patterns emerge.

\textbf{Key Takeaway:}
Fine-tuning reorganizes the embedding space into more discriminative regions while preserving continuity with the pre-trained representations, showing that \modelname~enhances the separability of user states without losing general semantic information.

\subsection{\textit{RQ6:} Hyper-parameter Sensitivity Analysis}
\label{sec:rq6}

We conduct analysis on the sensitivity of three key hyper-parameters: $\lambda_\text{s}$ for semantic divergence in Eq.~\eqref{eq:div_comb}, $\lambda_\text{t}$ for temporal divergence in Eq.~\eqref{eq:div_comb}, and $\lambda_r$ for routing regularization in Eq.~\eqref{eq:ftn_overall}.

Results in Figure~\ref{fig:sensitivity} show that $\lambda_\text{s}$ has a moderate influence on performance, with the best accuracy achieved around $\lambda_\text{s}=\text{0.6}$. Smaller values may underemphasize semantic alignment, while larger values slightly over-regularize the semantic divergence term.
For $\lambda_\text{t}$, the model maintains stable performance within the range of 0.1 to 0.3, but performance decreases when the value becomes overly large. This indicates that moderate temporal regularization helps preserve temporal smoothness, whereas excessive regularization may suppress useful dynamic patterns.
For $\lambda_r$, performance shows a sharp optimum near 0.1, suggesting that routing regularization needs to balance expert flexibility and stability. A small $\lambda_r$ may lead to unstable expert activation, while an overly large value may constrain target-domain adaptation.

\textbf{Key Takeaway:}
\modelname~is robust to a broad range of hyper-parameter values, maintaining stable performance under all the three factors. This indicates that its performance does not rely heavily on finely tuned hyper-parameter choices.

\begin{figure}[t]
    \centering
    \includegraphics[width=0.325\textwidth]{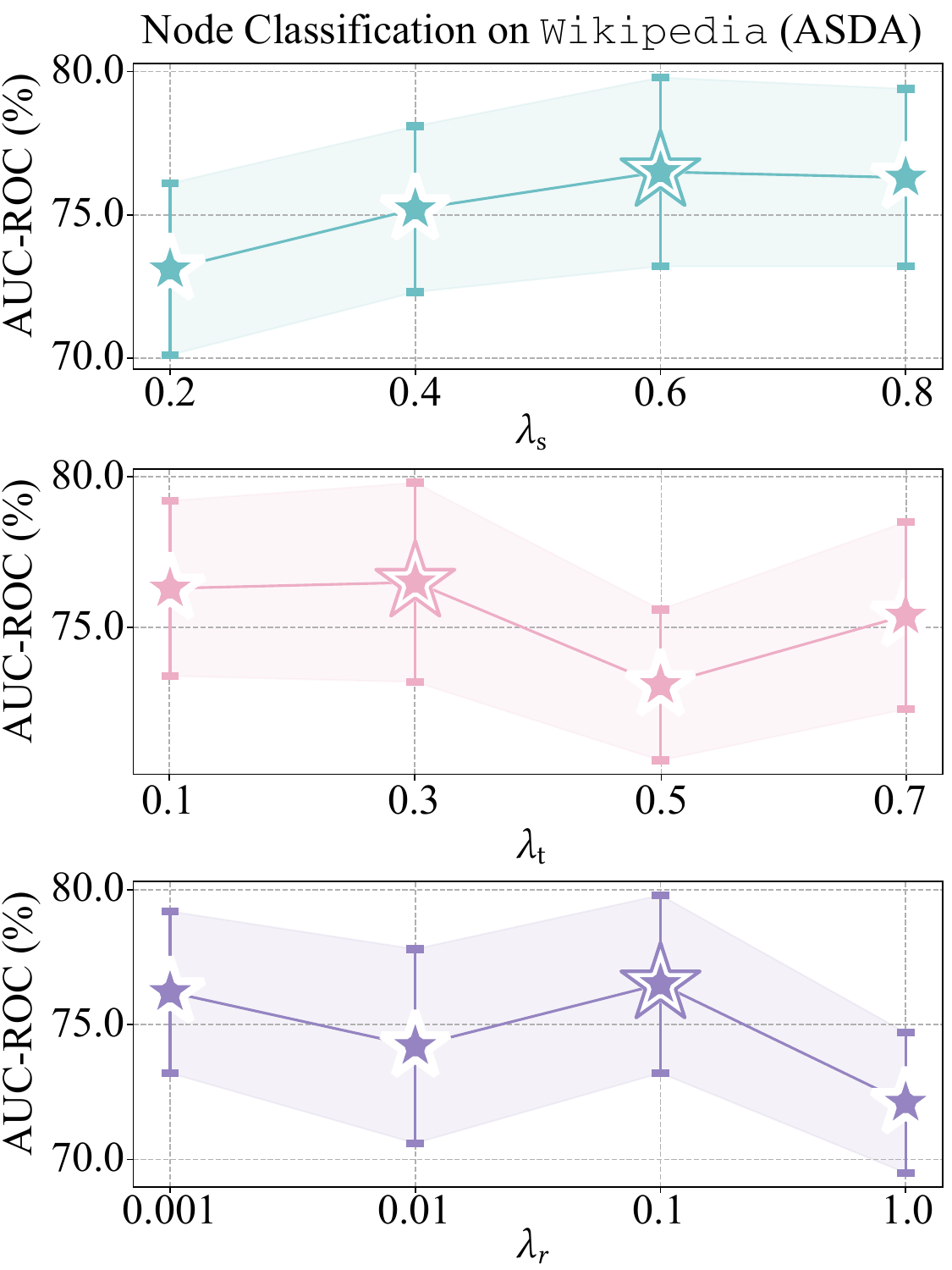}
    \caption{Hyper-parameter Sensitivity Analysis.}
    \label{fig:sensitivity}
\end{figure}

%% file: table/dataset.tex
\begin{table}[t]
  \centering
  \setlength{\tabcolsep}{2pt}
  \caption{Statistics of the dynamic graph dataset.}
  \resizebox{0.49\textwidth}{!}{
    \begin{tabular}{lrrrrrr}
    \toprule
    \textbf{Dataset} & $\#$\textbf{Node} & $\#$\textbf{Edge} & $\#$\textbf{Class} & $\#$\textbf{Label} & \makecell[r]{$\#$\textbf{Feat.}\\ \textbf{Dim.}} & \makecell[r]{\textbf{Time}\\\textbf{Span}} \\
    \midrule
    \texttt{Wikipedia}~\cite{kumar2019predicting} & 9,227 & 157,474 & 2 & 217 & 172 & 30 d\\
    \texttt{Reddit}~\cite{kumar2019predicting} & 11,000 & 672,447 & 2 & 366 & 172 & 30 d\\
    \texttt{MOOC}~\cite{kumar2019predicting} & 7,144 & 411,749 & 2 & 4,066 & 172 & 30 d\\
    \texttt{Genre}~\cite{huang2023temporal} & 1,505 & 17,858,395 & 474 & 984 & 86 & 1,500 d\\
    \bottomrule
    \end{tabular}%
    }
  \label{tab:dataset}%
\end{table}%

%% file: table/res_main.tex
\begin{table*}[!t]
\setlength{\tabcolsep}{2.5pt}
  \centering
  \caption{Comparison of node classification and link prediction (transductive and inductive) performance under ASDA and LODO settings. Results are reported with AUC-ROC (\%) with mean ± standard deviation over five runs. ``{\normalfont{bb.}}'' denotes backbones.}
  \resizebox{\textwidth}{!}{
    \begin{tabular}{lcccccccccccc}
    \toprule
    \textbf{Task} & \multicolumn{4}{c}{\textbf{Node Classification}} & \multicolumn{4}{c}{\textbf{Link Prediction} {(transductive)}} & \multicolumn{4}{c}{\textbf{Link Prediction} {(inductive)}} \\
    \cmidrule(r{1.5mm}){1-1} \cmidrule(r{1mm}){2-5} \cmidrule(r{1mm}){6-9} \cmidrule{10-13}
    \textbf{Target} & \texttt{{Wikipedia}} & \texttt{{Reddit}} & \texttt{{MOOC}}  & \texttt{{Genre}} & \texttt{{Wikipedia}} & \texttt{{Reddit}} & \texttt{{MOOC}}  & \texttt{{Genre}} & \texttt{{Wikipedia}} & \texttt{{Reddit}} & \texttt{{MOOC}}  & \texttt{{Genre}} \\
    \midrule[0.7pt]
    \textbf{Methods / Setting} & \multicolumn{12}{c}{\textbf{ASDA} (All-Seen-Domain-Adaptation)} \\
    \midrule[0.7pt]
    \texttt{TGN}~\scalebox{0.7}{\textcolor{Gray}{\textit{arXiv'20}}}~\cite{rossi2020temporal}   & 49.5\scalebox{0.8}{\;±5.1} & 47.6\scalebox{0.8}{\;±3.9} & 47.2\scalebox{0.8}{\;±7.3} & 47.9\scalebox{0.8}{\;±8.2} & 67.3\scalebox{0.8}{\;±4.1} & 70.2\scalebox{0.8}{\;±3.3} & 52.6\scalebox{0.8}{\;±3.1} & 64.5\scalebox{0.8}{\;±3.5} & 57.2\scalebox{0.8}{\;±2.5} & 67.0\scalebox{0.8}{\;±1.8} & 51.3\scalebox{0.8}{\;±7.7} & 56.7\scalebox{0.8}{\;±3.1} \\
    \texttt{TGAT}~\scalebox{0.7}{\textcolor{Gray}{\textit{ICLR'20}}} (bb.)~\cite{xu2020inductive}  & 65.3\scalebox{0.8}{\;±5.3} & 50.0\scalebox{0.8}{\;±4.5} & 55.3\scalebox{0.8}{\;±6.6} & 48.5\scalebox{0.8}{\;±6.6} & 54.1\scalebox{0.8}{\;±1.1} & 60.9\scalebox{0.8}{\;±2.1} & 53.6\scalebox{0.8}{\;±1.4} & 62.8\scalebox{0.8}{\;±5.1} & 52.9\scalebox{0.8}{\;±3.9} & 59.5\scalebox{0.8}{\;±2.2} & 52.8\scalebox{0.8}{\;±2.5} & 55.4\scalebox{0.8}{\;±1.9} \\
    \texttt{ROLAND-GCN}~\scalebox{0.7}{\textcolor{Gray}{\textit{KDD'22}}}~\cite{you2022roland} & 56.2\scalebox{0.8}{\;±5.6} & 45.5\scalebox{0.8}{\;±9.0} & 46.4\scalebox{0.8}{\;±8.1} & 43.5\scalebox{0.8}{\;±3.9} & 51.4\scalebox{0.8}{\;±1.1} & 57.5\scalebox{0.8}{\;±2.4} & 52.6\scalebox{0.8}{\;±1.8} & 53.7\scalebox{0.8}{\;±3.7} & 51.5\scalebox{0.8}{\;±1.2} & 55.7\scalebox{0.8}{\;±3.7} & 51.7\scalebox{0.8}{\;±4.5} & 53.7\scalebox{0.8}{\;±3.3} \\
    \texttt{ROLAND-GAT}~\scalebox{0.7}{\textcolor{Gray}{\textit{KDD'22}}}~\cite{you2022roland} & 60.0\scalebox{0.8}{\;±6.6} & 46.8\scalebox{0.8}{\;±8.0} & 47.2\scalebox{0.8}{\;±4.8} & 44.1\scalebox{0.8}{\;±3.9} & 54.6\scalebox{0.8}{\;±2.1} & 59.0\scalebox{0.8}{\;±2.4} & 52.5\scalebox{0.8}{\;±1.6} & 54.1\scalebox{0.8}{\;±1.3} & 54.3\scalebox{0.8}{\;±2.3} & 57.4\scalebox{0.8}{\;±2.5} & 52.2\scalebox{0.8}{\;±3.4} & 54.3\scalebox{0.8}{\;±2.2} \\
    \texttt{TREND}~\scalebox{0.7}{\textcolor{Gray}{\textit{WWW'22}}}~\cite{wen2022trend} & 67.0\scalebox{0.8}{\;±6.7} & 62.3\scalebox{0.8}{\;±6.7} & 63.3\scalebox{0.8}{\;±4.0} & 47.5\scalebox{0.8}{\;±5.3} & 61.9\scalebox{0.8}{\;±2.2} & 77.6\scalebox{0.8}{\;±1.1} & 54.8\scalebox{0.8}{\;±3.1} & 52.5\scalebox{0.8}{\;±2.5} & 52.5\scalebox{0.8}{\;±4.5} & 66.4\scalebox{0.8}{\;±2.8} & 52.3\scalebox{0.8}{\;±2.8} & 51.2\scalebox{0.8}{\;±1.0} \\
    \cmidrule(r{1.5mm}){1-1} \cmidrule(r{1mm}){2-5} \cmidrule(r{1mm}){6-9} \cmidrule{10-13}
    \texttt{DDGCL}~\scalebox{0.7}{\textcolor{Gray}{\textit{CIKM'21}}}~\cite{tian2021self} & 63.8\scalebox{0.8}{\;±7.4} & 52.0\scalebox{0.8}{\;±6.4} & 59.3\scalebox{0.8}{\;±9.5} & 46.5\scalebox{0.8}{\;±7.7} & 52.7\scalebox{0.8}{\;±4.2} & 60.9\scalebox{0.8}{\;±2.5} & 52.7\scalebox{0.8}{\;±5.5} & 63.5\scalebox{0.8}{\;±4.8} & 50.7\scalebox{0.8}{\;±3.6} & 54.7\scalebox{0.8}{\;±1.2} & 51.3\scalebox{0.8}{\;±1.7} & 50.0\scalebox{0.8}{\;±3.3} \\
    \texttt{CPDG}~\scalebox{0.7}{\textcolor{Gray}{\textit{ICDE'24}}}~\cite{bei2024cpdg}  & 49.2\scalebox{0.8}{\;±4.3} & 58.7\scalebox{0.8}{\;±4.7} & 51.3\scalebox{0.8}{\;±8.6} & 44.8\scalebox{0.8}{\;±5.9} & 53.9\scalebox{0.8}{\;±2.7} & 59.6\scalebox{0.8}{\;±1.6} & 53.3\scalebox{0.8}{\;±1.9} & 51.1\scalebox{0.8}{\;±2.1} & 50.7\scalebox{0.8}{\;±3.7} & 57.1\scalebox{0.8}{\;±3.3} & 52.5\scalebox{0.8}{\;±3.9} & 51.0\scalebox{0.8}{\;±4.4} \\
    \cmidrule(r{1.5mm}){1-1} \cmidrule(r{1mm}){2-5} \cmidrule(r{1mm}){6-9} \cmidrule{10-13}
    \texttt{GraphPrompt}~\scalebox{0.7}{\textcolor{Gray}{\textit{WWW'23}}}~\cite{liu2023graphprompt} & 71.0\scalebox{0.8}{\;±4.5} & 57.5\scalebox{0.8}{\;±4.3} & 62.3\scalebox{0.8}{\;±5.3} & 47.0\scalebox{0.8}{\;±5.1} & 53.5\scalebox{0.8}{\;±1.1} & 65.8\scalebox{0.8}{\;±1.7} & 50.7\scalebox{0.8}{\;±5.7} & 64.5\scalebox{0.8}{\;±6.0} & 52.1\scalebox{0.8}{\;±2.4} & 61.6\scalebox{0.8}{\;±2.6} & 50.3\scalebox{0.8}{\;±3.6} & 57.4\scalebox{0.8}{\;±1.3} \\
    \texttt{ProG}~\scalebox{0.7}{\textcolor{Gray}{\textit{NeurIPS'24}}}~\cite{zi2024prog}  & 57.5\scalebox{0.8}{\;±2.6} & 65.8\scalebox{0.8}{\;±3.4} & 60.3\scalebox{0.8}{\;±2.2} & 46.0\scalebox{0.8}{\;±2.2} & 65.9\scalebox{0.8}{\;±2.8} & 88.9\scalebox{0.8}{\;±2.2} & 54.2\scalebox{0.8}{\;±3.6} & 64.1\scalebox{0.8}{\;±3.1} & 57.9\scalebox{0.8}{\;±1.1} & 76.7\scalebox{0.8}{\;±5.7} & 53.7\scalebox{0.8}{\;±6.6} & 55.9\scalebox{0.8}{\;±2.6} \\
    \texttt{GCOPE}~\scalebox{0.7}{\textcolor{Gray}{\textit{KDD'24}}}~\cite{zhao2024all} & 59.9\scalebox{0.8}{\;±2.7} & 66.0\scalebox{0.8}{\;±6.6} & 66.0\scalebox{0.8}{\;±4.7} & 49.2\scalebox{0.8}{\;±5.4} & 66.5\scalebox{0.8}{\;±5.3} & 90.2\scalebox{0.8}{\;±1.9} & 53.6\scalebox{0.8}{\;±4.9} & 65.1\scalebox{0.8}{\;±6.0} & \underline{58.7\scalebox{0.8}{\;±3.1}} & 75.1\scalebox{0.8}{\;±2.7} & 52.1\scalebox{0.8}{\;±2.9} & 54.0\scalebox{0.8}{\;±3.8} \\
    \texttt{SAMGPT}~\scalebox{0.7}{\textcolor{Gray}{\textit{WWW'25}}}~\cite{yu2025samgpt} & 65.0\scalebox{0.8}{\;±4.0} & \underline{74.2\scalebox{0.8}{\;±2.2}} & 73.8\scalebox{0.8}{\;±5.4} & 50.1\scalebox{0.8}{\;±3.6} & \underline{69.9\scalebox{0.8}{\;±3.9}} & 90.3\scalebox{0.8}{\;±3.7} & 53.4\scalebox{0.8}{\;±3.7} & 65.9\scalebox{0.8}{\;±3.6} & 57.4\scalebox{0.8}{\;±4.1} & \underline{78.4\scalebox{0.8}{\;±2.0}} & 52.7\scalebox{0.8}{\;±1.3} & 54.9\scalebox{0.8}{\;±2.4} \\
    \cmidrule(r{1.5mm}){1-1} \cmidrule(r{1mm}){2-5} \cmidrule(r{1mm}){6-9} \cmidrule{10-13}
    \texttt{TIGPrompt}~\scalebox{0.7}{\textcolor{Gray}{\textit{arXiv'24}}}~\cite{chen2024prompt} & 69.8\scalebox{0.8}{\;±3.9} & 65.2\scalebox{0.8}{\;±5.9} & 70.4\scalebox{0.8}{\;±4.9} & 49.1\scalebox{0.8}{\;±3.9} & 60.8\scalebox{0.8}{\;±2.8} & 86.2\scalebox{0.8}{\;±1.8} & 51.6\scalebox{0.8}{\;±2.7} & 66.2\scalebox{0.8}{\;±2.9} & 51.7\scalebox{0.8}{\;±1.2} & 64.1\scalebox{0.8}{\;±2.4} & 51.8\scalebox{0.8}{\;±4.9} & 52.7\scalebox{0.8}{\;±3.9} \\
    \texttt{DyGPrompt}~\scalebox{0.7}{\textcolor{Gray}{\textit{ICLR'25}}}~\cite{yu2025nodetime} & \underline{74.0\scalebox{0.8}{\;±3.2}} & 72.8\scalebox{0.8}{\;±3.7} & \underline{76.0\scalebox{0.8}{\;±2.7}} & \underline{52.2\scalebox{0.8}{\;±5.2}} & 67.5\scalebox{0.8}{\;±4.7} & \underline{92.7\scalebox{0.8}{\;±2.9}} & \underline{55.3\scalebox{0.8}{\;±2.1}} & \underline{66.5\scalebox{0.8}{\;±2.4}} & 57.6\scalebox{0.8}{\;±2.4} & 74.5\scalebox{0.8}{\;±2.2} & \underline{53.8\scalebox{0.8}{\;±3.3}} & \underline{57.5\scalebox{0.8}{\;±2.3}} \\
    \cmidrule(r{1.5mm}){1-1} \cmidrule(r{1mm}){2-5} \cmidrule(r{1mm}){6-9} \cmidrule{10-13}
    \textbf{\modelname~\textit{(ours)}} & \textbf{76.5\scalebox{0.8}{\;±3.3}} & \textbf{77.5\scalebox{0.8}{\;±4.3}} & \textbf{79.8\scalebox{0.8}{\;±6.0}} & \textbf{55.3\scalebox{0.8}{\;±4.2}} & \textbf{70.5\scalebox{0.8}{\;±1.2}} & \textbf{94.2\scalebox{0.8}{\;±0.6}} & \textbf{55.4\scalebox{0.8}{\;±3.7}} & \textbf{67.1\scalebox{0.8}{\;±3.4}} & \textbf{59.6\scalebox{0.8}{\;±0.6}} & \textbf{80.6\scalebox{0.8}{\;±0.3}} & \textbf{54.0\scalebox{0.8}{\;±4.5}} & \textbf{58.7\scalebox{0.8}{\;±4.6}} \\
    \midrule[0.7pt]
    \textbf{Methods / Setting} & \multicolumn{12}{c}{\textbf{LODO} (Leave-One-Domain-Out)} \\
    \midrule[0.7pt]
    \texttt{TGN}~\scalebox{0.7}{\textcolor{Gray}{\textit{arXiv'20}}}~\cite{rossi2020temporal}   & 54.2\scalebox{0.8}{\;±6.5} & 44.8\scalebox{0.8}{\;±3.7} & 44.2\scalebox{0.8}{\;±7.0} & 44.8\scalebox{0.8}{\;±9.6} & 59.1\scalebox{0.8}{\;±2.0} & 63.3\scalebox{0.8}{\;±3.1} & 51.9\scalebox{0.8}{\;±4.4} & 62.7\scalebox{0.8}{\;±2.4} & 54.5\scalebox{0.8}{\;±2.1} & 65.0\scalebox{0.8}{\;±2.8} & 53.6\scalebox{0.8}{\;±6.9} & 56.2\scalebox{0.8}{\;±1.3} \\
    \texttt{TGAT}~\scalebox{0.7}{\textcolor{Gray}{\textit{ICLR'20}}} (bb.)~\cite{xu2020inductive}  & 61.8\scalebox{0.8}{\;±6.1} & 55.4\scalebox{0.8}{\;±3.8} & 52.3\scalebox{0.8}{\;±6.4} & 45.1\scalebox{0.8}{\;±7.1} & 52.4\scalebox{0.8}{\;±1.3} & 58.2\scalebox{0.8}{\;±2.5} & 51.6\scalebox{0.8}{\;±1.4} & 60.7\scalebox{0.8}{\;±6.7} & 51.9\scalebox{0.8}{\;±4.3} & 57.8\scalebox{0.8}{\;±3.1} & 54.5\scalebox{0.8}{\;±2.7} & 55.4\scalebox{0.8}{\;±2.5} \\
    \texttt{ROLAND-GCN}~\scalebox{0.7}{\textcolor{Gray}{\textit{KDD'22}}}~\cite{you2022roland} & 53.5\scalebox{0.8}{\;±4.4} & 42.2\scalebox{0.8}{\;±9.2} & 43.3\scalebox{0.8}{\;±7.5} & 41.3\scalebox{0.8}{\;±2.3} & 50.7\scalebox{0.8}{\;±2.9} & 54.0\scalebox{0.8}{\;±2.1} & 51.3\scalebox{0.8}{\;±5.3} & 54.0\scalebox{0.8}{\;±2.2} & 50.1\scalebox{0.8}{\;±2.4} & 54.5\scalebox{0.8}{\;±2.8} & 53.9\scalebox{0.8}{\;±3.7} & 52.0\scalebox{0.8}{\;±4.0} \\
    \texttt{ROLAND-GAT}~\scalebox{0.7}{\textcolor{Gray}{\textit{KDD'22}}}~\cite{you2022roland} & 57.5\scalebox{0.8}{\;±6.3} & 42.2\scalebox{0.8}{\;±7.5} & 44.7\scalebox{0.8}{\;±5.7} & 42.6\scalebox{0.8}{\;±1.7} & 53.9\scalebox{0.8}{\;±3.4} & 56.9\scalebox{0.8}{\;±1.7} & 50.6\scalebox{0.8}{\;±3.5} & 55.0\scalebox{0.8}{\;±3.1} & 52.4\scalebox{0.8}{\;±3.1} & 55.4\scalebox{0.8}{\;±1.8} & 53.7\scalebox{0.8}{\;±1.0} & 54.0\scalebox{0.8}{\;±1.3} \\
    \texttt{TREND}~\scalebox{0.7}{\textcolor{Gray}{\textit{WWW'22}}}~\cite{wen2022trend} & 63.1\scalebox{0.8}{\;±5.5} & 59.1\scalebox{0.8}{\;±5.1} & 61.3\scalebox{0.8}{\;±2.2} & 46.0\scalebox{0.8}{\;±3.2} & 60.2\scalebox{0.8}{\;±1.9} & 74.3\scalebox{0.8}{\;±1.6} & 51.0\scalebox{0.8}{\;±5.5} & 53.5\scalebox{0.8}{\;±2.3} & 51.2\scalebox{0.8}{\;±5.5} & 64.3\scalebox{0.8}{\;±2.1} & 55.3\scalebox{0.8}{\;±3.0} & 52.4\scalebox{0.8}{\;±1.3} \\
    \cmidrule(r{1.5mm}){1-1} \cmidrule(r{1mm}){2-5} \cmidrule(r{1mm}){6-9} \cmidrule{10-13}
    \texttt{DDGCL}~\scalebox{0.7}{\textcolor{Gray}{\textit{CIKM'21}}}~\cite{tian2021self} & 60.6\scalebox{0.8}{\;±8.0} & 50.6\scalebox{0.8}{\;±8.8} & 56.0\scalebox{0.8}{\;±8.7} & 44.5\scalebox{0.8}{\;±5.5} & 52.0\scalebox{0.8}{\;±4.8} & 58.1\scalebox{0.8}{\;±1.9} & 52.2\scalebox{0.8}{\;±7.7} & 61.6\scalebox{0.8}{\;±5.9} & 50.2\scalebox{0.8}{\;±1.4} & 51.4\scalebox{0.8}{\;±1.3} & 53.2\scalebox{0.8}{\;±1.7} & 50.6\scalebox{0.8}{\;±3.6} \\
    \texttt{CPDG}~\scalebox{0.7}{\textcolor{Gray}{\textit{ICDE'24}}}~\cite{bei2024cpdg}  & 46.9\scalebox{0.8}{\;±3.5} & 56.0\scalebox{0.8}{\;±3.4} & 45.9\scalebox{0.8}{\;±7.3} & 42.1\scalebox{0.8}{\;±6.4} & 51.3\scalebox{0.8}{\;±1.9} & 53.4\scalebox{0.8}{\;±1.4} & 50.9\scalebox{0.8}{\;±2.6} & 51.8\scalebox{0.8}{\;±2.4} & 50.9\scalebox{0.8}{\;±3.3} & 53.4\scalebox{0.8}{\;±1.2} & 54.7\scalebox{0.8}{\;±2.8} & 50.5\scalebox{0.8}{\;±2.6} \\
    \cmidrule(r{1.5mm}){1-1} \cmidrule(r{1mm}){2-5} \cmidrule(r{1mm}){6-9} \cmidrule{10-13}
    \texttt{GraphPrompt}~\scalebox{0.7}{\textcolor{Gray}{\textit{WWW'23}}}~\cite{liu2023graphprompt} & 68.2\scalebox{0.8}{\;±3.0} & 53.7\scalebox{0.8}{\;±3.5} & 58.1\scalebox{0.8}{\;±3.5} & 43.9\scalebox{0.8}{\;±4.8} & 52.3\scalebox{0.8}{\;±1.1} & 62.4\scalebox{0.8}{\;±1.3} & 52.2\scalebox{0.8}{\;±5.1} & 62.4\scalebox{0.8}{\;±5.6} & 51.9\scalebox{0.8}{\;±1.2} & 58.6\scalebox{0.8}{\;±2.0} & 51.7\scalebox{0.8}{\;±5.6} & 56.6\scalebox{0.8}{\;±1.8} \\
    \texttt{ProG}~\scalebox{0.7}{\textcolor{Gray}{\textit{NeurIPS'24}}}~\cite{zi2024prog}  & 53.8\scalebox{0.8}{\;±4.3} & 62.4\scalebox{0.8}{\;±2.0} & 58.4\scalebox{0.8}{\;±4.6} & 43.8\scalebox{0.8}{\;±2.9} & 63.4\scalebox{0.8}{\;±3.3} & 86.5\scalebox{0.8}{\;±4.7} & 50.2\scalebox{0.8}{\;±4.5} & 61.4\scalebox{0.8}{\;±1.9} & 55.4\scalebox{0.8}{\;±3.9} & 73.0\scalebox{0.8}{\;±3.3} & 53.9\scalebox{0.8}{\;±4.4} & 56.9\scalebox{0.8}{\;±1.7} \\
    \texttt{GCOPE}~\scalebox{0.7}{\textcolor{Gray}{\textit{KDD'24}}}~\cite{zhao2024all} & 58.0\scalebox{0.8}{\;±2.4} & 64.7\scalebox{0.8}{\;±6.0} & 62.2\scalebox{0.8}{\;±4.2} & 49.7\scalebox{0.8}{\;±4.9} & 67.6\scalebox{0.8}{\;±5.2} & 86.5\scalebox{0.8}{\;±2.5} & 52.2\scalebox{0.8}{\;±2.0} & 64.8\scalebox{0.8}{\;±4.6} & 54.2\scalebox{0.8}{\;±1.8} & 71.3\scalebox{0.8}{\;±3.6} & 52.3\scalebox{0.8}{\;±1.5} & 55.0\scalebox{0.8}{\;±2.1} \\
    \texttt{SAMGPT}~\scalebox{0.7}{\textcolor{Gray}{\textit{WWW'25}}}~\cite{yu2025samgpt} & 70.0\scalebox{0.8}{\;±2.4} & 72.9\scalebox{0.8}{\;±4.1} & 72.3\scalebox{0.8}{\;±7.5} & \underline{51.1\scalebox{0.8}{\;±4.1}} & \underline{68.0\scalebox{0.8}{\;±2.7}} & 86.3\scalebox{0.8}{\;±1.2} & 52.6\scalebox{0.8}{\;±1.5} & \underline{66.5\scalebox{0.8}{\;±1.3}} & 55.5\scalebox{0.8}{\;±5.8} & 73.0\scalebox{0.8}{\;±1.4} & \underline{55.6\scalebox{0.8}{\;±3.9}} & \textbf{58.4\scalebox{0.8}{\;±1.0}} \\
    \cmidrule(r{1.5mm}){1-1} \cmidrule(r{1mm}){2-5} \cmidrule(r{1mm}){6-9} \cmidrule{10-13}
    \texttt{TIGPrompt}~\scalebox{0.7}{\textcolor{Gray}{\textit{arXiv'24}}}~\cite{chen2024prompt} & 67.2\scalebox{0.8}{\;±4.2} & 62.9\scalebox{0.8}{\;±5.0} & 68.5\scalebox{0.8}{\;±5.8} & 48.0\scalebox{0.8}{\;±4.4} & 57.5\scalebox{0.8}{\;±1.6} & 74.6\scalebox{0.8}{\;±3.0} & 51.3\scalebox{0.8}{\;±1.1} & 64.2\scalebox{0.8}{\;±4.0} & 50.1\scalebox{0.8}{\;±4.0} & 62.8\scalebox{0.8}{\;±1.4} & 52.7\scalebox{0.8}{\;±3.6} & 55.3\scalebox{0.8}{\;±3.1} \\
    \texttt{DyGPrompt}~\scalebox{0.7}{\textcolor{Gray}{\textit{ICLR'25}}}~\cite{yu2025nodetime} & \underline{74.6\scalebox{0.8}{\;±5.9}} & \underline{74.5\scalebox{0.8}{\;±4.6}} & \underline{72.8\scalebox{0.8}{\;±3.8}} & 47.1\scalebox{0.8}{\;±4.1} & 64.8\scalebox{0.8}{\;±5.7} & \underline{87.6\scalebox{0.8}{\;±1.0}} & \underline{53.0\scalebox{0.8}{\;±4.8}} & 65.3\scalebox{0.8}{\;±2.3} & \underline{55.8\scalebox{0.8}{\;±2.1}} & \underline{74.0\scalebox{0.8}{\;±4.7}} & 54.7\scalebox{0.8}{\;±3.2} & 57.7\scalebox{0.8}{\;±2.3} \\
    \cmidrule(r{1.5mm}){1-1} \cmidrule(r{1mm}){2-5} \cmidrule(r{1mm}){6-9} \cmidrule{10-13}
    \textbf{\modelname~\textit{(ours)}} & \textbf{76.7\scalebox{0.8}{\;±6.2}} & \textbf{76.8\scalebox{0.8}{\;±4.7}} & \textbf{78.6\scalebox{0.8}{\;±2.0}} & \textbf{52.6\scalebox{0.8}{\;±3.3}} & \textbf{70.2\scalebox{0.8}{\;±1.1}} & \textbf{93.8\scalebox{0.8}{\;±4.2}} & \textbf{53.1\scalebox{0.8}{\;±0.6}} & \textbf{67.4\scalebox{0.8}{\;±0.6}} & \textbf{56.1\scalebox{0.8}{\;±0.5}} & \textbf{74.7\scalebox{0.8}{\;±1.1}} & \textbf{56.0\scalebox{0.8}{\;±4.1}} & \underline{58.0\scalebox{0.8}{\;±5.4}} \\
    \bottomrule
    \end{tabular}%
}
\label{tab:main}
\end{table*}

%% file: table/res_klodo.tex
\begin{table}[t]
  \centering
  \caption{Results under the ${K}$-LODO setting for node classification with AUC-ROC (\%). ``\texttt{W}'' denotes {\textnormal{\texttt{Wikipedia}}}, ``\texttt{R}'' denotes {\textnormal{\texttt{Reddit}}}, ``\texttt{M}'' denotes {\textnormal{\texttt{MOOC}}}, and ``\texttt{G}'' denotes {\textnormal{\texttt{Genre}}}.}
  \resizebox{\linewidth}{!}{
    \centering
    \begin{tabular}{l|lcccc}
    \toprule
    $\boldsymbol{K}$     & \textbf{Source}     & \makecell{\texttt{R}\;+\;\texttt{M}\\\texttt{R}\;+\;\texttt{G}\\\texttt{M}\;+\;\texttt{G}}    & \makecell{\texttt{W}\;+\;\texttt{M}\\\texttt{W}\;+\;\texttt{G}\\\texttt{M}\;+\;\texttt{G}}    & \makecell{\texttt{W}\;+\;\texttt{R}\\\texttt{W}\;+\;\texttt{G}\\\texttt{R}\;+\;\texttt{G}}    & \makecell{\texttt{W}\;+\;\texttt{R}\\\texttt{W}\;+\;\texttt{M}\\\texttt{R}\;+\;\texttt{M}} \\
    \midrule
    \multirow{3}[10]{*} {\makecell[l]{2}}     & \textbf{Target}     & \texttt{Wikipedia}     & \texttt{Reddit}     & \texttt{MOOC}     & \texttt{Genre} \\
    \cmidrule{2-6}
    & \textbf{\modelname} & \makecell{77.6\\75.4\\78.1}    & \makecell{76.9\\77.6\\78.2}    & \makecell{74.8\\78.4\\78.9}    & \makecell{51.0\\54.2\\53.3} \\
    \cmidrule{2-6}
    & \textbf{Average} & 77.0\scalebox{0.7}{\;±1.4}    & 77.6\scalebox{0.7}{\;±0.7}    & 77.4\scalebox{0.7}{\;±2.2}    & 52.8\scalebox{0.7}{\;±1.7} \\
    \midrule[0.7pt]
    $\boldsymbol{K}$     & \textbf{Source}     & \makecell{\texttt{R}\\\texttt{M}\\\texttt{G}}    & \makecell{\texttt{W}\\\texttt{M}\\\texttt{G}}    & \makecell{\texttt{W}\\\texttt{R}\\\texttt{G}}    & \makecell{\texttt{W}\\\texttt{R}\\\texttt{M}} \\
    \midrule
    \multirow{3}[10]{*} {\makecell[l]{3}}     & \textbf{Target}     & \texttt{Wikipedia}     & \texttt{Reddit}     & \texttt{MOOC}     & \texttt{Genre} \\
    \cmidrule{2-6}
    & \textbf{\modelname} & \makecell{79.4\\76.8\\75.2}    & \makecell{77.5\\75.1\\77.0}    & \makecell{78.9\\75.3\\79.6}    & \makecell{52.7\\52.1\\58.9} \\
    \cmidrule{2-6}
    & \textbf{Average} & 77.1\scalebox{0.7}{\;±2.1}    & 76.5\scalebox{0.7}{\;±1.3}    & 77.9\scalebox{0.7}{\;±2.3}    & 54.6\scalebox{0.7}{\;±3.8} \\
    \bottomrule
    \end{tabular}%
    }
  \label{tab:klodo}%
\end{table}%

%% file: 6_conclusion.tex
\section{Conclusion}
\label{sec:conclusion}
In this paper, we proposed \modelname, a dynamic Graph Foundation Model for multiple domains based on decoupled and divergence-conditioned prompting. \modelname~addresses the incompatibility of multi-domain dynamic graphs through semantic-temporal decoupled pre-training, where the semantic branch learns transferable feature semantics, and the temporal branch preserves domain-specific temporal dynamics. To mitigate negative transfer, \modelname~further introduces divergence-aware routing to select relevant source-domain knowledge, and employs divergence-conditioned prompts for efficient downstream fine-tuning with frozen encoders. Extensive experiments on four continuous-time dynamic graph benchmarks show that \modelname~outperforms representative DGNNs, dynamic graph pre-training methods, static GFMs, and dynamic graph prompting baselines on node classification and link prediction tasks, demonstrating strong transferability, adaptability, and fine-tuning efficiency.

%% file: appendix/1_notations.tex
\section{Summary of Notations}
For clarity, we summarize the main notations used throughout this paper in Table~\ref{tab:notation}. These notations cover the dynamic graph setting, semantic-temporal pre-training, divergence-aware routing, prompt-based fine-tuning, and complexity analysis.
\input{table/notation}

%% file: table/notation.tex
\begin{table}[htbp]
  \centering
  \setlength{\tabcolsep}{3pt}
  \renewcommand{\arraystretch}{1.2}
  \caption{Summary of notations.}
  \resizebox{0.49\textwidth}{!}{
\begin{tabular}{p{0.33\linewidth} p{0.63\linewidth}}

\toprule

\textbf{Notation} & \textbf{Description} \\

\midrule

\multicolumn{2}{l}{\textit{\textbf{Basic notations}}} \\

\midrule

$\mathcal{G}=\{\mathcal{V},\mathcal{E},\mathcal{T}\}$, $(u,v,t)$ & Dynamic graph and temporal interaction. \\

$\mathbf{X}$, $\mathbf{A}$, $\mathbf{e}_{uv}$ & Node features, adjacency, and edge features. \\

$d_0$, $d$, $d_t$, $r$, $p$ & Input, hidden, time, adapter, prompt dimensions. \\

$\mathbf{z}$, $\widetilde{\mathbf z}$, $\mathbf{h}$, $\mathbf{H}$ & Node-level and fused representations. \\

$\boldsymbol{\mathcal T}(\cdot)$, $\operatorname{PE}(\cdot)$, $\omega_i$ & Time encoding terms. \\

$\phi(\cdot)$, $\psi(\cdot)$, $\operatorname{AGG}(\cdot)$, $g(\cdot)$ & Message, ranking, aggregation, scoring functions. \\

\midrule

\multicolumn{2}{l}{\textit{\textbf{Pre-training and adaptation setting}}} \\

\midrule

$\{\mathcal{G}_i^{\mathcal S},D_i^{\mathcal S},Y_i^{\mathcal S}\}_{i=1}^n$ & Source graphs, domains, and labels. \\

$\mathcal{G}^{\mathcal T}$, $D^{\mathcal T}$, $\mathcal D_{\text{sup}}^{\mathcal T}$ & Target graph, domain, and support set. \\

$n$, $m$ & Number of domains and shots. \\

$h=g\circ f$, $f_\text{s}$, $f_\text{t}$ & Learner, semantic encoder, and temporal encoder. \\

$\boldsymbol{\theta}^{\star}$, $f_\text{s}^{\star}$, $f_\text{t}^{\star}$ & Frozen parameters and encoders. \\

\midrule

\multicolumn{2}{l}{\textit{\textbf{Semantic branch}}} \\

\midrule

$\mathbf{X}_i^{\mathcal S}$, $\mathbf{A}_i^{\mathcal S}$, $\widehat{\mathbf X}_i^{\mathcal S}$, $\widehat{\mathbf X}^{\mathcal T}$, $\widehat{\mathbf A}^{\mathcal T}$ & Source/target features and structures. \\

$\boldsymbol{\mathcal A}_i$, $\boldsymbol{\theta}_\text{s}$ & Aligner and semantic parameters. \\

$\mathbf{Z}^{\mathcal S}$, $\mathbf{Z}^{\mathcal S+}$, $\mathbf{Z}^{\mathcal T}$ & Source, positive, and target representations. \\

$I(\cdot;\cdot)$, $q_\phi(\cdot)$, $p(\cdot)$, $\operatorname{KL}(\cdot)$ & IB-related quantities. \\

$N^+$, $N^-$, $\tau$ & Positive count, negative count, and temperature. \\

\makecell[l]{$\mathcal L_\text{InfoNCE}$, $\mathcal L_\text{SS-IB}$, $\mathcal L_\text{pre}$,\\$\mathcal L_\text{pre-sem}$} & Semantic-branch losses. \\

$\lambda_1$ & IB regularization weight. \\

\midrule

\multicolumn{2}{l}{\textit{\textbf{Temporal branch}}} \\

\midrule

$\mathbf r_i^{\mathcal S}(t)$, $\mathbf r^{\mathcal T}(t)$, $\boldsymbol{\mathcal T}_i$ & Time embeddings and timer. \\

$\boldsymbol{\Gamma}_i$, $\boldsymbol{\Theta}_\text{t}$, $\boldsymbol{\Phi}_i$, $\boldsymbol{\Psi}_i$ & Adapter and temporal parameters. \\

$\mathcal N_i(v,t)$, $\mathbf m_{u\to v}^{\mathcal S}$, $a_{u\to v}$ & Temporal neighbors, messages, and attention. \\

$\mathcal B_i$, $\eta$ & Mini-batch and learning rate. \\

$\mathcal L_\text{pre-tem}^{(i)}$ & Temporal pre-training loss. \\

\midrule

\multicolumn{2}{l}{\textit{\textbf{Cross-domain routing}}} \\

\midrule

$\mathbf s_i^{\mathcal S}$, $\mathbf t_i^{\mathcal S}$, $\mathbf s^{\mathcal T}$, $\mathbf t^{\mathcal T}$ & Source and target prototypes. \\

$\overline{\mathcal G}^{\mathcal T}$, $\overline{\mathcal V}^{\mathcal T}$, $\overline{\mathcal E}^{\mathcal T}$ & Timestamp-removed target graph. \\

$\boldsymbol{\mu}^{\mathcal S}_i$, $\boldsymbol{\mu}^{\mathcal T}$, $\boldsymbol{\sigma}^{2\mathcal S}_i$, $\boldsymbol{\sigma}^{2\mathcal T}$ & Gaussian statistics. \\

$\operatorname{Div}_\text{s}$, $\operatorname{Div}_\text{t}$, $\operatorname{Div}_i$ & Semantic, temporal, and total divergences. \\

$\lambda_\text{s}$, $\lambda_\text{t}$, $\gamma$ & Routing hyper-parameters. \\

$\boldsymbol{\alpha}$, $\alpha_i$, $\bar{\mathbf s}^{\mathcal S}$, $\bar{\mathbf t}^{\mathcal S}$ & Routing weights and priors. \\

$\mathcal R_\text{routing}$, $H(\boldsymbol{\alpha})$, $\lambda_2$, $\lambda_3$ & Routing regularization terms. \\

\midrule

\multicolumn{2}{l}{\textit{\textbf{Prompt-based fine-tuning}}} \\

\midrule

$\boldsymbol{\mathcal P}_\text{s}$, $\boldsymbol{\mathcal P}_\text{t}$, $\boldsymbol{\Omega}_\text{s}$, $\boldsymbol{\Omega}_\text{t}$ & Prompt generators and parameters. \\

$\mathbf p_\text{s}^{\mathcal T}$, $\mathbf p_\text{t}^{\mathcal T}$ & Semantic and temporal prompts. \\

$\mathbf Z_{\mathbf p}^{\mathcal T}$, $\widetilde{\mathbf Z}_{\mathbf p}^{\mathcal T}(t)$ & Prompted representations. \\

$\operatorname{LN}(\cdot)$, $\operatorname{Linear}(\cdot)$ & Fusion operators. \\

$\overline{\mathbf h}_{t,\mathbf y}^{\mathcal T}$, $\mathcal Y$ & Class prototype and label space. \\

$\mathcal L_\text{node}$, $\mathcal L_\text{link}$, $\mathcal L_\text{task}$, $\mathcal L_\text{ftn}$ & Fine-tuning losses. \\

$\lambda_r$ & Routing regularization weight. \\

\bottomrule

\end{tabular}
    }
  \label{tab:notation}%
\end{table}%